\def\ProofVersion{1}
\newcommand{\conc}[1]{\scalebox{0.9}{\ensuremath{\mathsf{#1}}\xspace}}
\newcommand{\ALC}{\ensuremath{\mathcal{ALC}}\xspace}
\newcommand{\el}{\ensuremath{\mathcal{E\!L}}\xspace}
\newcommand{\interp}{\ensuremath{\mathcal{I}}\xspace}
\newcommand{\kb}{\ensuremath{\mathcal{K}}\xspace}
\newcommand{\modelSet}{\ensuremath{\mathcal{\operatorname{Mod}}}\xspace}
\newcommand{\sig}{\ensuremath{\mathcal{\operatorname{Sig}}}\xspace}
\newcommand{\countDom}[1]{\ensuremath{[#1]^\interp}}
\newcommand{\countDomP}[2]{\ensuremath{[#1]^{#2}}}
\newcommand{\NC}{\ensuremath{{\sf N_C}}\xspace}
\newcommand{\NR}{\ensuremath{{\sf N_R}}\xspace}
\newcommand{\Cmc}{\ensuremath{\mathcal{C}}\xspace}
\newcommand{\Emc}{\ensuremath{\mathcal{E}}\xspace}
\newcommand{\Fmc}{\ensuremath{\mathcal{F}}\xspace}
\newcommand{\Imc}{\ensuremath{\mathcal{I}}\xspace}
\newcommand{\Kmc}{\ensuremath{\mathcal{K}}\xspace}
\newcommand{\Lmc}{\ensuremath{\mathcal{L}}\xspace}
\newcommand{\Smc}{\ensuremath{\mathcal{S}}\xspace}
\newcommand{\Tmc}{\ensuremath{\mathcal{T}}\xspace}
\newcommand{\Vmc}{\ensuremath{\mathcal{V}}\xspace}
\newcommand{\rL}[1]{\ensuremath{\bf L_{#1}}\xspace}
\newcommand{\rC}[1]{\ensuremath{\bf C_{#1}}\xspace}
\begin{document}

\ifnum\ProofVersion=0
\title{Towards Statistical Reasoning in Description Logics over Finite Domains}
\fi
\ifnum\ProofVersion=1
\title{Towards Statistical Reasoning in Description Logics over Finite Domains (Full Version)}
\fi
\author{
Rafael Pe\~naloza\inst{1} 
\and  
Nico Potyka\inst{2}
}

\institute{KRDB Research Centre, Free University of Bozen-Bolzano, Italy \\
			\email{rafael.penaloza@unibz.it}
            \and 
            University of Osnabr\"uck, Germany \\						
            \email{npotyka@uni-osnabrueck.de} }

\maketitle

\begin{abstract}
We present a probabilistic extension of the description logic \ALC for reasoning about
statistical knowledge. We consider conditional statements over proportions of the domain
and are interested in the probabilistic-logical consequences of these proportions. 
After introducing some general reasoning problems and analyzing their properties, 
we present first algorithms and complexity results for reasoning in some fragments of Statistical \ALC. 
\end{abstract}

\section{Introduction}

Probabilistic logics enrich classical logics with probabilities in order to incorporate uncertainty.
In \cite{halpern1990analysis}, probabilistic logics have been classified into three types that differ
in the way how they handle probabilities.
Type 1 logics enrich classical interpretations with probability distributions over the domain and 
are well suited for reasoning about statistical probabilities. This includes proportional statements
like ``$2 \%$ of the population suffer from a particular disease.'' 
Type 2 logics consider probability distributions over possible worlds and are better suited for expressing
subjective probabilities or degrees of belief. For instance, a medical doctor might say that she is $90 \%$
sure about her diagnosis.
Type 3 logics combine type 1 and type 2 logics allow to reason about both kinds of uncertainty.

One basic desiderata of probabilistic logics is that they generalize a classical logic.
That is, the probabilistic interpretation of formulas with probability 1 should agree with the classical interpretation.
However, given that first-order logic is undecidable, a probabilistic first-order logic that satisfies
our basic desiderata will necessarily be undecidable. In order to overcome the problem, 
we can, for instance, restrict to Herbrand interpretations over a fixed domain
\cite{Nilsson1986AI,Lukasiewicz2001,beierle2015extending}
or consider decidable fragments like description logics \cite{koller1997p,LuSt-JWS08,CePe17}.

Probabilistic type 2 extensions of description logics have been previously studied in~\cite{LutzS10}.
In the unpublished appendix of this work, a type 1 extension of \ALC is presented along with a
proof sketch for {\sc ExpTime}-completeness of the corresponding satisfiability problem.
This type 1 extension enriches
classical interpretations with probability distributions over the domain as suggested in \cite{halpern1990analysis}.
We consider a similar, but more restrictive setting here. We are interested in an \ALC extension
that allows statistical reasoning. However, we do not impose a probability distribution over the domain.
Instead, we are only interested in reasoning about the proportions of a population satisfying some given
properties.
For instance, given statistical information about the relative frequency of certain symptoms, diseases and 
the relative frequency of symptoms given diseases, one can ask the
relative frequency of a disease given a particular combination of symptoms.
Therefore, we consider only classical \ALC interpretations with finite domains and are interested in the relative
proportions that are true in these interpretations.

Hence, interpretations in our framework can be regarded as a subset of the
interpretations in \cite{LutzS10}, namely those with finite domains and a uniform probability 
distribution over the domain. These interpretations are indeed sufficient for our purpose.
In particular, by considering strictly less interpretations, we may be able to derive tighter answer
intervals for some queries.
Our approach bears some resemblance to the random world approach from~\cite{grove1992random}.
However, the authors in \cite{grove1992random} consider possible worlds with a fixed domain size $N$ and 
are interested in the limit of proportions as $N$ goes to infinity.  
We are interested in all finite possible worlds that satisfy certain proportions and ask what statistical
statements must be true in all these worlds.

We begin by introducing Statistical \ALC in Section \ref{sec_stat_alc} together with three
relevant reasoning problems. Namely, the Satisfiability Problem, the l-Entailment problem
and the p-Entailment problem. In Section \ref{sec_properties}, we will then discuss some logical properties
of Statistical \ALC.
In Section \ref{sec_stat_el} and \ref{sec_stat_pos}, we present first computational results for 
fragments of Statistical \ALC.
\ifnum\ProofVersion=0
We had to omit several proofs in order to meet space restrictions. All proofs can be found in the full version of the paper.\footnote{\url{http://www.inf.unibz.it/~penaloza/sum2017full.pdf}}
\fi

\section{Statistical \ALC}
\label{sec_stat_alc}

We start by revisiting the classical description logic \ALC.
Given two disjoint sets \NC of \emph{concept names} and \NR of \emph{role names}, 
\ALC concepts are built using the grammar rule
$
C ::= \top \mid A \mid \neg C \mid C\sqcap C\mid \exists r.C,
$
where $A\in\NC$ and $r\in\NR$.
One can express disjunction, universal quantification
and subsumption through the usual logical equivalences
like $C_1 \sqcup C_2 \equiv \neg (\neg C_1 \sqcap \neg C_2)$.
For the semantics, we focus on finite interpretations. An
\emph{\ALC interpretation} $\Imc = (\Delta^\Imc, \cdot^\Imc)$ consist of a 
non-empty, finite domain $\Delta^\Imc$ and
an interpretation function $\cdot^\Imc$ that maps concept names $A\in\NC$ to sets 
$A^\Imc \subseteq \Delta^\Imc$ and roles names $r\in\NR$ to binary relations 
$r^\Imc \subseteq \Delta^\Imc \times \Delta^\Imc$.
Two \ALC\ concepts $C_1, C_2$  are \emph{equivalent} ($C_1 \equiv C_2$) iff $C_1^\Imc = C_2^\Imc$ 
for all interpretations $\Imc$.

Here, we consider a probabilistic extension of $\ALC$.
Statistical \ALC knowledge bases consist of probabilistic conditionals that are built up
over \ALC concepts.
\begin{definition}[Conditionals, Statistical KB]
A \emph{probabilistic $\ALC$ conditional} is an expression of the form
$(C\mid D)[\ell,u]$, where $C,D$ are \ALC concepts and $\ell, u \in \mathbb{Q}$ are rational numbers such that $0\le \ell\le u\le 1$.
A \emph{statistical \ALC knowledge base} (KB) is a set $\kb$ of probabilistic $\ALC$ conditionals.
\end{definition}
For brevity, we usually call probabilistic \ALC conditionals simply \emph{conditionals}.
\begin{example}
\label{flu_example}
Let $\kb_\textit{{flu}} = \{(\exists \conc{has}.\conc{fever}\mid \exists \conc{has}.\conc{flu})[0.9,0.95],
(\exists \conc{has}.\conc{flu} \mid \top)[0.01,0.03]\}$.
$\kb_{\textit{{flu}}}$ states that $90$ to $95$ percent of patients who have the flu
have fever, and that only $1$ to $3$ percent of patients have the flu. 
\end{example}
Intuitively, a conditional $(C\mid D)[\ell,u]$ expresses that the relative proportion of elements of $D$ 
that also belong to $C$ is between $\ell$ and $u$. 
In order to make this more precise,
consider a finite \ALC interpretation \Imc, and an \ALC concept $X$. We denote the cardinality of $X^\Imc$
by $\countDom{X}$, that is, $\countDom{X}:=|X^\Imc|$.
The interpretation \Imc \emph{satisfies} $(C\mid D)[\ell,u]$, written as $\Imc \models (C\mid D)[\ell,u]$, iff either 
$\countDom{D}=0$ or 
\begin{equation}
\frac{\countDom{C\sqcap D}}{\countDom{D}}\in[\ell,u].
\label{eq:defCondSat}
\end{equation}
\Imc \emph{satisfies} a statistical \ALC knowledge base $\kb$ iff it satisfies all conditionals in $\kb$.
In this case, we call $\Imc$ a model of $\kb$ and write $\Imc \models \kb$. 
We denote the set of all models of $\kb$ by $\modelSet(\kb)$.
As usual, $\kb$ is \emph{consistent} if $\modelSet(\kb) \neq \emptyset$ and inconsistent otherwise.
We call two knowledge bases $\kb_1, \kb_2$ equivalent and write $\kb_1 \equiv \kb_2$ iff
$\modelSet(\kb_1) = \modelSet(\kb_2)$.
\begin{example}
\label{flu_example_model}
Consider again the KB $\kb_{\textit{{flu}}}$ from Example \ref{flu_example}.
Let $\Imc$ be an interpretation with $1000$ individuals. $10$ of these have the flu and
$9$ have both the flu and fever. Then $\Imc \in \modelSet(\kb_{\textit{{flu}}})$.
\end{example}

In classical \ALC, knowledge bases are defined by a set of general concept inclusions (GCIs) 
$C \sqsubseteq D$ that express that $C$ is a subconcept of $D$. An interpretation $\Imc$ satisfies 
$C \sqsubseteq D$ iff $C^\Imc \subseteq D^\Imc$.
As shown next, GCIs can be seen as a special kind of conditionals, and hence statistical \ALC KBs are 
a generalization of classical \ALC KBs.
\begin{proposition}
\label{prop:inclusion_and_det_conditionals}
For all statistical \ALC interpretations $\Imc$, we have
$\Imc \models C \sqsubseteq D$ iff $\Imc \models (D\mid C)[1,1]$.
\end{proposition}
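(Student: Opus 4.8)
The plan is to prove the biconditional by unfolding both semantic definitions and reducing everything to the elementary set-theoretic fact that, for finite sets, $C^\Imc \subseteq D^\Imc$ is equivalent to $\countDom{C \sqcap D} = \countDom{C}$. Recall that $\Imc \models C \sqsubseteq D$ holds exactly when $C^\Imc \subseteq D^\Imc$, whereas by the semantics of conditionals (instantiated with the numerator $D$ and the denominator $C$), $\Imc \models (D \mid C)[1,1]$ holds exactly when either $\countDom{C} = 0$ or $\countDom{D \sqcap C}/\countDom{C} \in [1,1]$, the latter being equivalent to $\countDom{D \sqcap C} = \countDom{C}$ since a quotient lies in $[1,1]$ iff it equals $1$.

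I would first dispatch the degenerate case $\countDom{C} = 0$: here $C^\Imc = \emptyset$, so $C^\Imc \subseteq D^\Imc$ holds trivially, and the conditional is satisfied through the first disjunct of its definition; hence both sides of the biconditional are true and the equivalence holds. For the main case $\countDom{C} > 0$, I would argue both directions using $(C \sqcap D)^\Imc = C^\Imc \cap D^\Imc \subseteq C^\Imc$, which makes the inclusion $C^\Imc \subseteq D^\Imc$ equivalent to $C^\Imc \cap D^\Imc = C^\Imc$. For the forward direction, assuming $C^\Imc \subseteq D^\Imc$ yields $C^\Imc \cap D^\Imc = C^\Imc$, hence $\countDom{D \sqcap C} = \countDom{C}$ and the ratio equals $1 \in [1,1]$, so the conditional is satisfied. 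For the backward direction, satisfaction of the conditional together with $\countDom{C} > 0$ forces $\countDom{D \sqcap C} = \countDom{C}$, that is $|C^\Imc \cap D^\Imc| = |C^\Imc|$.

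The only point deserving care — and the place where the restriction to \emph{finite} interpretations is essential — is concluding $C^\Imc \cap D^\Imc = C^\Imc$ from $|C^\Imc \cap D^\Imc| = |C^\Imc|$ in the backward direction. This step relies on the fact that a finite set and a subset of it with equal cardinality must coincide; over infinite domains the inference would fail. Since the paper fixes finite domains throughout, the step is immediate, and the inclusion $C^\Imc \subseteq D^\Imc$ follows, completing the proof. I do not expect any genuine obstacle: the argument is a direct chain of equivalences whose only subtleties are the bookkeeping of the empty-extension case and the tacit appeal to finiteness.
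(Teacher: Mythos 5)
Your proposal is correct and follows essentially the same route as the paper's proof: unfold both definitions, handle the $\countDom{C}=0$ case separately, and in the remaining case reduce the equivalence to $\countDom{C\sqcap D}=\countDom{C}$, which over a finite domain forces $C^\Imc\cap D^\Imc=C^\Imc$. Your explicit remark on where finiteness is used is a nice touch; the paper makes the same inference implicitly via a contrapositive (an element of $C^\Imc\setminus D^\Imc$ would make the cardinality strictly smaller).
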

\begin{proof}
If $\Imc \models C \sqsubseteq D$ then $C^\Imc \subseteq D^\Imc$ and 
$C^\Imc \cap D^\Imc = C^\Imc$. If $C^\Imc = \emptyset$, we have $\countDom{C}=0$.
Otherwise $\frac{\countDom{C\sqcap D}}{\countDom{C}}=1$. Hence, $\Imc \models (D\mid C)[1,1]$.

Conversely, assume  $\Imc \models (D\mid C)[1,1]$. If $\countDom{C}=0$, then
$C^\Imc = \emptyset$ and $\Imc \models C \sqsubseteq D$. Otherwise,
$\frac{\countDom{C\sqcap D}}{\countDom{C}}=1$, that is, $\countDom{C\sqcap D} = \countDom{C}$. 
If there was a $d \in C^\Imc \setminus D^\Imc$, we had $\countDom{C\sqcap D} < \countDom{C}$,
hence, we have  $C^\Imc \subseteq D^\Imc$  and $\Imc \models C \sqsubseteq D$.
\qed
\end{proof}
Given a statistical \ALC knowledge base $\kb$, the first problem that we are interested in is deciding consistency
of $\kb$. We define the satisfiability problem for statistical \ALC knowledge bases as usual.
\paragraph{\bf Satisfiability Problem:} Given a knowledge base $\kb$, decide whether $\modelSet(\kb) \neq \emptyset$.
\begin{example}
\label{flu_example_sat}
Consider again the knowledge base $\kb_{\textit{{flu}}}$ from Example \ref{flu_example}.
The conditional $(\exists \conc{has}.\conc{flu} \mid \top)[0.01,0.03]$ implies that
$\countDom{\exists \conc{has}.\conc{flu}} \geq 0.01$ for all models $\Imc \in \modelSet(\kb_{\textit{{flu}}})$.
$(\exists \conc{has}.\conc{fever}\mid \exists \conc{has}.\conc{flu})[0.9,0.95]$
implies 
$\countDom{\exists \conc{has}.\conc{fever}\sqcap \exists \conc{has}.\conc{flu}} 
\geq
0.9 \countDom{\exists \conc{has}.\conc{flu}} $.
Therefore, 
$\countDom{\exists \conc{has}.\conc{fever}} \geq
\countDom{\exists \conc{has}.\conc{fever}\sqcap \exists \conc{has}.\conc{flu}} \geq
0.9 \countDom{\exists \conc{has}.\conc{flu}} \geq 0.009$.
Hence, adding the conditional 
$(\exists \conc{has}.\conc{fever} \mid \top)[0,0.005]\}$
renders $\kb_{\textit{{flu}}}$ inconsistent.
\end{example}
If $\kb$ is consistent, we are interested in deriving (implicit) probabilistic conclusions.
We can think of different reasoning problems in this context. 
First, we can define an entailment relation analogously to logical entailment.
Then, the probabilistic conditional $(C\mid D)[\ell,u]$ is an \emph{l-consequence} of the KB
$\kb$ iff $\modelSet(\kb) \subseteq \modelSet(\{(C\mid D)[\ell,u]\})$. In this case, we write
$\kb \models_l (C\mid D)[\ell,u]$.
In the context of type 2 probabilistic conditionals, this entailment relation has also been called
 just \emph{logical consequence} \cite{Lukasiewicz2001}.
\paragraph{\bf l-Entailment Problem:} Given a knowledge base $\kb$ and a conditional 
	$(C\mid D)[\ell,u]$, decide whether $\kb \models_l (C\mid D)[\ell,u]$.
\begin{example}
\label{flu_example_l_ent}
Consider again the KB $\kb_{\textit{{flu}}}$ from Example \ref{flu_example}.
As explained in Example \ref{flu_example_sat}, $\countDom{\exists \conc{has}.\conc{fever}}\geq 0.009$
holds for all models $\Imc \in \modelSet(\kb)$.
Therefore, it follows that $\kb_{\textit{{flu}}} \models_l (\exists \conc{has}.\conc{fever} \mid \top)[0.009,1]$.
That is, our statistical information suggests that at least $9$ out of $1,000$ of our patients have fever.
\end{example}
\begin{example}
\label{penguin_example}
Consider a domain with birds (B), penguins (P) and
flying animals (F).
We let 
$\kb_{\textit{{birds}}} = \{(B \mid \top)[0.5,0.6], (F \mid B)[0.85,0.9],(F \mid P)[0,0]\}.$
Note that the conditional $(F \mid B)[0.85,0.9]$ is actually equivalent to 
$(\neg F \mid B)[0.1,0.15]$. Furthermore, for all $\Imc \in \modelSet(\kb_{\textit{{birds}}})$,
$(F \mid P)[0,0]$ implies $\countDom{P \sqcap F} = 0$. 
Therefore, we have 
$\countDom{P \sqcap B} = \countDom{B \sqcap P \sqcap F} + \countDom{B \sqcap P \sqcap \neg F}
\leq 0 +  \countDom{B \sqcap \neg F}
\leq 0.15  \countDom{B}$.
Hence, $\kb_{\textit{{birds}}} \models_l (P \mid B)[0,0.15]$.
That is, our statistical information suggests that at most $15$ out of $100$ birds in our population are penguins.
\end{example}
As usual, the satisfiability problem can be reduced to the l-entailment problem.
\begin{proposition}
$\kb$ is inconsistent iff $\kb \models_l (\top \mid \top)[0,0]$.
\end{proposition}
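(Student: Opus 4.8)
The plan is to reduce both directions to a single fact, namely the emptiness of $\modelSet(\kb)$. The crucial observation is that the conditional $(\top \mid \top)[0,0]$ is \emph{itself} unsatisfiable, so that $\modelSet(\{(\top \mid \top)[0,0]\}) = \emptyset$; once this is established, the claimed equivalence falls out immediately from the definition of l-entailment. So the real content of the proof is a one-line analysis of a degenerate conditional, and I would structure the argument around that.

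First I would determine when an interpretation $\Imc$ satisfies $(\top \mid \top)[0,0]$. By the satisfaction condition~\eqref{eq:defCondSat}, this requires either $\countDom{\top} = 0$ or $\frac{\countDom{\top \sqcap \top}}{\countDom{\top}} \in [0,0]$. Here I rely on the standing assumption that every interpretation has a \emph{non-empty}, finite domain: this gives $\countDom{\top} = |\Delta^\Imc| \geq 1 > 0$, which rules out the first disjunct. For the second disjunct, note $\top \sqcap \top \equiv \top$, so the ratio equals $1 \notin [0,0]$. Hence no interpretation satisfies $(\top \mid \top)[0,0]$, that is, $\modelSet(\{(\top \mid \top)[0,0]\}) = \emptyset$.

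With this in hand the equivalence is pure unfolding of definitions. By definition of l-entailment, $\kb \models_l (\top \mid \top)[0,0]$ means $\modelSet(\kb) \subseteq \modelSet(\{(\top \mid \top)[0,0]\}) = \emptyset$, and $\modelSet(\kb) \subseteq \emptyset$ holds precisely when $\modelSet(\kb) = \emptyset$, i.e.\ precisely when $\kb$ is inconsistent. I expect no genuine obstacle; the only point worth stating explicitly is the appeal to the non-emptiness of the domain, which is exactly what makes the always-violated conditional $(\top \mid \top)[0,0]$ behave as a bottom element in this logic, mirroring how the satisfiability problem reduces to l-entailment in the classical setting.
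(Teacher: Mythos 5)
Your proposal is correct and follows essentially the same argument as the paper: both establish that $(\top\mid\top)[0,0]$ is unsatisfiable because the domain is non-empty and the ratio $\countDom{\top\sqcap\top}/\countDom{\top}$ equals $1$, and then reduce the equivalence to $\modelSet(\kb)\subseteq\emptyset$. Your presentation merely front-loads the key observation before handling both directions, which is a fine organizational choice.
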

%
\begin{proof}
If $\kb$ is inconsistent, then $\modelSet(\kb) = \emptyset$
and so $\kb \models_l (\top \mid \top)[0,0]$.

Conversely, assume $\kb \models_l (\top \mid \top)[0,0]$.
We have $\countDom{\top} > 0$ and $\frac{\countDom{\top \sqcap \top}}{\countDom{\top}} = 1$
for all interpretations $\Imc$.
Hence, $\modelSet(\{(\top \mid \top)[0,0]\}) = \emptyset$ 
and since $\kb \models_l (\top \mid \top)[0,0]$, we must have $\modelSet(\kb) = \emptyset$  as well.
\qed
\end{proof}
%
Often, we do not want to check whether a specific conditional is entailed, but rather deduce tight
probabilistic bounds for a statement. This problem is often referred to as the 
\emph{probabilistic entailment problem} in other probabilistic logics, see \cite{Nilsson1986AI,HansenJaumard2000,Lukasiewicz2001} for instance.
Consider a \emph{query} of the form $(C\mid D)$, where $C,D$ are \ALC concepts.
We define the p-Entailment problem similar to the probabilistic entailment problem for type 2 probabilistic logics.
\paragraph{\bf p-Entailment Problem:} Given knowledge base $\kb$ and a query $(C\mid D)$,
	 find minimal and maximal solutions of the optimization problems
	\begin{alignat*}{2}
	&\inf_{\Imc \in \modelSet(\kb)} / \sup_{\Imc \in \modelSet(\kb)} \quad
		 &&\frac{\countDom{C\sqcap D}}{\countDom{D}} \\
	&\textit{subject to} &&\countDom{D} > 0
	\end{alignat*}
Since the objective function $\frac{\countDom{C\sqcap D}}{\countDom{D}}$ is bounded from below by $0$ and
from above by $1$, the infimum $m$ and the maximum $M$ are well-defined whenever there is a model 
$\Imc \in \modelSet(\kb)$ such that $\countDom{D} > 0$. 
In this case, we say that $\kb$ p-entails
$(C\mid D)[m,M]$ and write $\kb \models_p (C\mid D)[m,M]$.
In the context of type 2 probabilistic conditionals, this entailment relation has also been called
\emph{tight logical consequence} \cite{Lukasiewicz2001}.
If $\countDom{D} = 0$ for all $\Imc \in \modelSet(\kb)$, the p-Entailment problem is infeasible, that is,
there exists no solution. 
\begin{example}
\label{penguin_example_2}
In Example \ref{penguin_example}, we found that $\kb_{\textit{{birds}}} \models_l (P \mid B)[0,0.15]$.
This bound is actually tight. Since $0$ is always a lower bound and we showed that $0.15$ is an upper bound, 
it suffices to give examples of interpretations that take these bounds. 
For the lower bound, let $\Imc_0$ be an interpretation with
$200$ individuals. $100$ of these individuals are birds and $85$ are birds that can fly. There are no penguins.
Then $\Imc_0$ is a model of $\kb_{\textit{{birds}}}$ with $\countDomP{B}{\Imc_0} > 0$ that satisfies $(P \mid B)[0,0]$.
Construct $\Imc_1$ from $\Imc_0$ by letting the 15 non-flying birds be penguins.
Then $\Imc_1$ is another model of $\kb_{\textit{{birds}}}$ and $\Imc_1$ satisfies $(P \mid B)[0.15,0.15]$.
Hence, we also have $\kb_{\textit{{birds}}} \models_p (P \mid B)[0,0.15]$.
\end{example}

If $\kb \models_p (C\mid D)[m,M]$, one might ask whether the values between $m$ and $M$ are actually taken
by some model of $\kb$ or whether there can be large gaps in between. 
For the probabilistic entailment problem for type 2 logics, we can show that the models of $\kb$ do indeed 
yield a dense interval by noting that each convex combination of models is a model and applying the Intermediate Value Theorem from Real Analysis. However, in our framework, we do not consider probability distributions over possible worlds, but the worlds themselves, which are discrete in nature. We therefore cannot apply the same 
tools here.
However, for each two models that yield different probabilities for a query, 
we can find another model that takes the probability in the middle of these probabilities.  
\begin{lemma}[Bisection Lemma]
Let $C, D$ be two arbitrary \ALC concepts.
If there exist $\Imc_0, \Imc_1 \in \modelSet(\kb)$ such that 
$r_0 = \frac{\countDomP{C\sqcap D}{\Imc_0}}{\countDomP{D}{\Imc_0}} <
\frac{\countDomP{C\sqcap D}{\Imc_1}}{\countDomP{D}{\Imc_1}} = r_1$,
then there is an $\Imc_{0.5} \in \modelSet(\kb)$ such that
$\frac{\countDomP{C\sqcap D}{\Imc_{0.5}}}{\countDomP{D}{\Imc_{0.5}}} = \frac{r_0+ r_1}{2}$.
\end{lemma}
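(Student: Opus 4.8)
The plan is to construct $\Imc_{0.5}$ as a disjoint union of finitely many copies of $\Imc_0$ and $\Imc_1$, choosing the number of copies so that the resulting query ratio is forced to the arithmetic mean. The tool that makes this work is the standard invariance of \ALC under disjoint unions: if an interpretation is obtained as the disjoint union of finitely many interpretations with pairwise disjoint domains, then for every \ALC concept $X$ its extension distributes over the components, so the cardinality $\countDomP{X}{\cdot}$ of the union equals the sum of the cardinalities over the components. This is a routine structural induction on $X$; the only non-trivial case is $\exists r.X'$, which goes through because every role edge of a disjoint union stays inside a single component. In particular, taking $n$ isomorphic copies of one interpretation multiplies every concept cardinality by $n$.

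Concretely, put $b_0 := \countDomP{D}{\Imc_0}$ and $b_1 := \countDomP{D}{\Imc_1}$; both are positive integers, since $r_0$ and $r_1$ are well-defined fractions. I would let $\Imc_{0.5}$ be the disjoint union of $b_1$ copies of $\Imc_0$ and $b_0$ copies of $\Imc_1$. By the invariance above, $\countDomP{X}{\Imc_{0.5}} = b_1\,\countDomP{X}{\Imc_0} + b_0\,\countDomP{X}{\Imc_1}$ for every \ALC concept $X$. The choice of the weights $b_1$ and $b_0$ is exactly what balances the two conditioning populations: the contribution $b_1\,\countDomP{D}{\Imc_0} = b_1 b_0$ equals the contribution $b_0\,\countDomP{D}{\Imc_1} = b_0 b_1$, which is what turns the mediant below into the arithmetic mean rather than an arbitrary convex combination.

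Two facts then remain to be verified. First, $\Imc_{0.5} \models \kb$. Fix a conditional $(C'\mid D')[\ell,u] \in \kb$ and write $a_i' := \countDomP{C'\sqcap D'}{\Imc_i}$ and $b_i' := \countDomP{D'}{\Imc_i}$. If $\countDomP{D'}{\Imc_{0.5}} = b_1 b_0' + b_0 b_1' = 0$, the conditional holds vacuously. Otherwise, whenever $b_i' = 0$ we also have $a_i' = 0$ (since every element of $C'\sqcap D'$ lies in $D'$), so that component contributes nothing; and whenever $b_i' > 0$, the hypothesis $\Imc_i \models (C'\mid D')[\ell,u]$ gives $\ell b_i' \le a_i' \le u b_i'$. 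Adding these with the nonnegative weights $b_1, b_0$ yields $\ell\,(b_1 b_0' + b_0 b_1') \le b_1 a_0' + b_0 a_1' \le u\,(b_1 b_0' + b_0 b_1')$, i.e. $\frac{\countDomP{C'\sqcap D'}{\Imc_{0.5}}}{\countDomP{D'}{\Imc_{0.5}}} \in [\ell,u]$. Hence $\Imc_{0.5}$ satisfies every conditional, so $\Imc_{0.5} \in \modelSet(\kb)$.

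Second, the query ratio hits the midpoint. Since $\countDomP{D}{\Imc_{0.5}} = 2 b_0 b_1 > 0$,
\[
\frac{\countDomP{C\sqcap D}{\Imc_{0.5}}}{\countDomP{D}{\Imc_{0.5}}}
= \frac{b_1\,\countDomP{C\sqcap D}{\Imc_0} + b_0\,\countDomP{C\sqcap D}{\Imc_1}}{2 b_0 b_1}
= \frac{1}{2}\left(\frac{\countDomP{C\sqcap D}{\Imc_0}}{b_0} + \frac{\countDomP{C\sqcap D}{\Imc_1}}{b_1}\right)
= \frac{r_0 + r_1}{2},
\]
which is the claimed value. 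The main obstacle is not the arithmetic but the bookkeeping: choosing the multiplicities so the conditioning populations are balanced, and correctly dispatching the degenerate case in which some $D'$ is empty in one of the two base models. The invariance of \ALC under disjoint unions carries all the structural work and should be quoted rather than reproved.
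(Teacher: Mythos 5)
Your proof is correct, and while it shares the paper's basic strategy---build $\Imc_{0.5}$ as a disjoint union of multiple copies of $\Imc_0$ and $\Imc_1$, using the fact that \ALC concept cardinalities add over disjoint unions---your choice of multiplicities and your verification of model-hood are genuinely simpler than the paper's. The paper balances \emph{every} conditional's denominator simultaneously: it takes the least common multiple $\ell$ of all non-zero values $\countDomP{D_i}{\Imc_0},\countDomP{D_i}{\Imc_1}$ over all conditionals (and the query), builds products $k'=k\prod_i k_i$ and $K'=K\prod_i K_i$ of the resulting cofactors, and then shows that each conditional's ratio in the union is an explicit convex combination of its ratios in the two components. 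You observe that this global balancing is unnecessary: for the conditionals one does not need the ratio to be any particular convex combination, only to land in $[\ell,u]$, and that follows from summing the inequalities $\ell\, b_i' \le a_i' \le u\, b_i'$ on numerators and denominators with any nonnegative weights (the mediant inequality), with the degenerate case $b_i'=0$ handled by $a_i'\le b_i'$. Only the query needs exact balancing, and for that it suffices to take $b_1=\countDomP{D}{\Imc_1}$ copies of $\Imc_0$ and $b_0=\countDomP{D}{\Imc_0}$ copies of $\Imc_1$. This buys a shorter, less error-prone computation (the paper's bookkeeping has several index-level slips, e.g.\ the final substitution $n=K'$, $N=K'$) at no loss of generality; the paper's heavier machinery buys nothing extra here, since the convex-combination form it establishes for each conditional is used only to conclude membership in $[\ell,u]$, which your weighted-sum argument gives directly.
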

\ifnum\ProofVersion=1
\begin{proof}
Given an interpretation $\Imc$ and $n \in \mathbb{N}$, we construct the interpretation $\Imc^{(n)}$
as follows. We set $\Delta^{\Imc^{(n)}} = \{d_1, \dots, d_n \mid d \in \Delta^\Imc\}$; that is, we make
$n$ different copies of the domain. 
For all $A \in \NC$, we set $A^{\Imc^{(n)}} = \{d_1, \dots, d_n \mid d \in A^\Imc\}$, and for all
$r \in \NR$, we set 
$r^{\Imc^{(n)}} = \{(d_1,e_1), \dots, (d_1,e_n), \dots (d_n,e_1), \dots, (d_n,e_n) \mid (d,e) \in r^\Imc\}$.
By induction on the shape of \ALC concepts, we can show that 
$\countDom{F} = n \countDomP{F}{\Imc^{(n)}}$
for all concepts $F$. 

Let now $(C_i\mid D_i)[\ell_i,u_i]$, $i=1,\dots,|\kb|$ be all the conditionals from $\kb$.
Let $\ell$ be the least common multiple of all values from 
$\countDomP{D}{\Imc_0}, \countDomP{D}{\Imc_1}, \countDomP{D_1}{\Imc_0}, 
  \countDomP{D_1}{\Imc_1}, \dots, \countDomP{D_n}{\Imc_0}$, $\countDomP{D_n}{\Imc_1}$ 
  that are non-zero, and
$k, K, k_1, K_1, \dots,  k_n, K_n$ be such that $k \countDomP{D}{\Imc_0} = \ell$, 
$K \countDomP{D}{\Imc_1} = \ell,\dots, k_n \countDomP{D_n}{\Imc_0} = \ell$, 
$K_n \countDomP{D_n}{\Imc_1} = \ell$.
Assume w.l.o.g. that $\Imc_0$ and $\Imc_1$ have different domains (just rename the elements of
one domain if necessary).
For $n,N \in \mathbb{N}$, let $\Imc_{n,N}$ be the interpretation that is obtained from $\Imc_0^{(n)}$
and $\Imc_1^{(N)}$ by taking the union of the domains, concept and role interpretations. That is,
$\Delta^{\Imc_{n,N}} = \Delta^{\Imc_0^{(n)}} \cup \Delta^{\Imc_1^{(N)}}$,
$A^{\Imc_{n,N}} = A^{\Imc_0^{(n)}} \cup A^{\Imc_1^{(N)}}$  and
$r^{\Imc_{n,N}} = r^{\Imc_0^{(n)}} \cup r^{\Imc_1^{(N)}}$.
Consider $k' = k \prod_{i=1}^n k_i$, $k'_{-j} = k \prod_{i\neq j} k_i$, $K' = K \prod_{i=1}^n K_i$
and $K'_{-j} = K \prod_{i\neq j}^n K_i$. 
Then, for $i=1,\dots,|\kb|$,
\begin{align*}
\frac{\countDomP{C_i\sqcap D_i}{\Imc_{k'n,K'N}}}{\countDomP{D_i}{\Imc_{k'n,K'N}}}
&= \frac{\countDomP{C_i\sqcap D_i}{\Imc_0^{(k'n)}} + \countDomP{C_i\sqcap D_i}{\Imc_1^{(K'N)}}}{\countDomP{D_i}{\Imc_0^{(k'n)}} + \countDomP{D_i}{\Imc_1^{(K'N)}}} \\
&= \frac{k'n \countDomP{C_i\sqcap D_i}{\Imc_0} + K'N \countDomP{C_i\sqcap D_i}{\Imc_1}}{k'n \countDomP{D_i}{\Imc_0} + K'N \countDomP{D_i}{\Imc_1}}\\
&= \frac{k'n \countDomP{C_i\sqcap D_i}{\Imc_0} + K'N \countDomP{C_i\sqcap D_i}{\Imc_1}}{(k'_{-i} n + K'_{-i} N)l}\\
&= \frac{k'n }{k'_{-i} n + K'_{-i} N} \frac{\countDomP{C_i\sqcap D_i}{\Imc_0}}{k_i  \countDomP{D_i}{\Imc_0}}
+ \frac{K'N }{k'_{-i} n + K'_{-i} N} \frac{\countDomP{C_i\sqcap D_i}{\Imc_1}}{K_i \countDomP{D_i}{\Imc_1}}\\
&= \frac{k'n }{k'_{-i} n + K'_{-i} N} \frac{\countDomP{C_i\sqcap D_i}{\Imc_0}}{ \countDomP{D_i}{\Imc_0}}
+ \frac{K'_{-i}N }{k'_{-i} n + K'_{-i} N} \frac{\countDomP{C_i\sqcap D_i}{\Imc_1}}{ \countDomP{D_i}{\Imc_1}}.
\end{align*}
The last equality shows that $\frac{\countDomP{C_i\sqcap D_i}{\Imc_{k'n,K'N}}}{\countDomP{D_i}{\Imc_{k'n,K'N}}}$ is a convex combination of $\frac{\countDomP{C_i\sqcap D_i}{\Imc_0}}{ \countDomP{D_i}{\Imc_0}}$ and
$\frac{\countDomP{C_i\sqcap D_i}{\Imc_1}}{ \countDomP{D_i}{\Imc_1}}$. Since, $\Imc_0$ and $\Imc_1$ satisfy
the $i$-th conditional, $\Imc_{k'n,K'N}$ satisfies the conditional as well. In case that both 
$\countDomP{D_i}{\Imc_0}=0$ and $\countDomP{D_i}{\Imc_1}=0$, we have $\countDomP{D_i}{\Imc_{k'n,K'N}}=0$ as well and so the
conditional is still satisfied. If only $\countDomP{D_i}{\Imc_0}=0$, we can see from the second inequality that
$\frac{\countDomP{C_i\sqcap D_i}{\Imc_{k'n,K'N}}}{\countDomP{D_i}{\Imc_{k'n,K'N}}} = 
\frac{k'n 0 + K'N \countDomP{C_i\sqcap D_i}{\Imc_1}}{k'n 0 + K'N \countDomP{D_i}{\Imc_1}}
= \frac{\countDomP{C_i\sqcap D_i}{\Imc_1}}{\countDomP{D_i}{\Imc_1}}
$ and the conditional is still satisfied. The case $\countDomP{D_i}{\Imc_1}=0$ is analogous of course.
Hence, $\Imc_{k'n,K'N} \in \modelSet(\kb)$ for all choices of $n$ and $N$.

Let $k_0 = \prod_{i=1}^n k_i$ and $K_0 = \prod_{i=1}^n K_i$.
Then we can show completely analogously that 
$\frac{\countDomP{C\sqcap D}{\Imc_{k'n,K'N}}}{\countDomP{D}{\Imc_{k'n,K'N}}}
=\frac{k_0 n }{k_0 n  + K_0  N } \frac{\countDomP{C_i\sqcap D_i}{\Imc_0}}{ \countDomP{D_i}{\Imc_0}}
+ \frac{K_0 N  }{ k_0 n  + K_0  N } \frac{\countDomP{C_i\sqcap D_i}{\Imc_1}}{ \countDomP{D_i}{\Imc_1}}.
$
Letting $n=K'$ and $N=K'$, we have
$\frac{\countDomP{C\sqcap D}{\Imc_{k'K',K'k'}}}{\countDomP{D}{\Imc_{k'K',K'k'}}}
=\frac{1}{2} \frac{\countDomP{C_i\sqcap D_i}{\Imc_0}}{ \countDomP{D_i}{\Imc_0}}
+ \frac{1}{2} \frac{\countDomP{C_i\sqcap D_i}{\Imc_1}}{ \countDomP{D_i}{\Imc_1}}
=\frac{r_0+ r_1}{2}.
$
\qed
\end{proof}
\fi
We can now show that for each value between the lower and upper bound given by p-entailment, we can
find a model that gives a probability arbitrarily close to this value.
\begin{proposition}[Intermediate Values]
\label{prop:intermed_values}
Let $\kb \models_p (C\mid D)[m,M]$. Then for every $x \in (m,M)$ (where $(m,M)$ denotes the open 
interval between $m$ and $M$) and for all $\epsilon >0$, there
is a $\Imc_{x,\epsilon} \in \modelSet(\kb)$ such that $|\frac{\countDomP{C\sqcap D}{\Imc_{x,\epsilon}}}{\countDomP{D}{\Imc_{x,\epsilon}}} - x| < \epsilon$.
\end{proposition}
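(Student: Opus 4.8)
The plan is to use the Bisection Lemma as the engine of a binary-search argument, so that the required model is obtained by repeatedly halving an interval that brackets $x$. Throughout, write $p(\Imc) := \frac{\countDomP{C\sqcap D}{\Imc}}{\countDomP{D}{\Imc}}$ for the query ratio of a model $\Imc$ with $\countDomP{D}{\Imc}>0$. First I would set up an initial bracketing pair of models. Since $M$ is the \emph{maximum} of $p$ over $\modelSet(\kb)$, it is attained: there is some $\Imc_1 \in \modelSet(\kb)$ with $p(\Imc_1) = M$, and because $x < M$ this gives $p(\Imc_1) > x$. Since $m$ is the \emph{infimum} and $x > m$, the definition of infimum yields some $\Imc_0 \in \modelSet(\kb)$ with $m \le p(\Imc_0) < x$. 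Hence at the outset I have two models strictly bracketing $x$, namely $p(\Imc_0) < x < p(\Imc_1)$.

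Next I would iterate the Bisection Lemma, maintaining the following invariant: after $k$ steps I possess two models $\Imc_0^{(k)}, \Imc_1^{(k)} \in \modelSet(\kb)$ with ratios $a_k := p(\Imc_0^{(k)})$ and $b_k := p(\Imc_1^{(k)})$ satisfying $a_k < x < b_k$ and $b_k - a_k = (b_0 - a_0)/2^k$, where $a_0 = p(\Imc_0)$ and $b_0 = M$. Given the pair at step $k$, the Bisection Lemma applies (since $a_k < b_k$) and produces a model realizing the midpoint $c = (a_k + b_k)/2$. If $c = x$ the claim is immediate; otherwise $c$ lies strictly on one side of $x$, and I retain the sub-interval still containing $x$ together with its two bracketing models. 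Concretely, if $c > x$ I set $(\Imc_0^{(k+1)}, \Imc_1^{(k+1)})$ to be $\Imc_0^{(k)}$ together with the new midpoint model, and symmetrically if $c < x$. This yields the step-$(k+1)$ pair and halves the bracket width, preserving the invariant.

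Finally I would conclude by the Archimedean property: choose $k$ large enough that $(b_0 - a_0)/2^k < \epsilon$. For this $k$ we have $a_k < x < b_k$ with $b_k - a_k < \epsilon$, so both $a_k$ and $b_k$ lie within distance $\epsilon$ of $x$; taking $\Imc_{x,\epsilon} := \Imc_0^{(k)}$ then satisfies $|p(\Imc_{x,\epsilon}) - x| < \epsilon$, as required. The step needing the most care is the initial bracketing rather than the iteration: one must use that $M$ is \emph{attained} (a maximum), so that some model realizes $M > x$ and gives strict bracketing on the right, whereas $m$ is only an \emph{infimum} and need not be attained — there the definition of infimum supplies exactly what is needed, a model with ratio in $[m,x)$. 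The remainder is a routine bisection whose correctness rests entirely on the Bisection Lemma.
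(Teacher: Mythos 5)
Your proof is correct and follows essentially the same route as the paper's: both obtain an initial pair of models bracketing $x$ from the definition of $m$ and $M$, then iterate the Bisection Lemma to halve the bracket until its width drops below $\epsilon$. Your treatment of the initial bracketing (distinguishing the attained maximum $M$ from the infimum $m$) is in fact slightly more careful than the paper's, but it does not change the argument.
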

\ifnum\ProofVersion=1
\begin{proof}
Since $\kb \models_p (C\mid D)[m,M]$, there must exist an $\Imc_0 \in \modelSet(\kb)$ such that 
$m \leq \frac{\countDomP{C\sqcap D}{\Imc_0}}{\countDomP{D}{\Imc_0}} \leq x$
and  an $\Imc_1 \in \modelSet(\kb)$ such that 
$x \leq \frac{\countDomP{C\sqcap D}{\Imc_1}}{\countDomP{D}{\Imc_1}} \leq M$.

Consider the following bisection algorithm: we let 
$\Imc^\bot_0 = \Imc_0$,
$\Imc^\top_0 = \Imc_1$.
Then starting from $i=1$, we let $\Imc^{0.5}_i$ be the model of $\kb$ that is obtained
from  $\Imc^\bot_{i-1}$ and $\Imc^\top_{i-1}$ as explained in the bisection lemma. 
If $\frac{\countDomP{C\sqcap D}{\Imc^{0.5}_i}}{\countDomP{D}{\Imc^{0.5}_i}} = x$,
we are done. Otherwise, if 
$\frac{\countDomP{C\sqcap D}{\Imc^{0.5}_i}}{\countDomP{D}{\Imc^{0.5}_i}} < x$,
we let $\Imc^\bot_i = \Imc^\bot_{i-1}$ and $\Imc^\top_i = \Imc^{0.5}$.
Otherwise, we have $\frac{\countDomP{C\sqcap D}{\Imc^{0.5}_i}}{\countDomP{D}{\Imc^{0.5}_i}} < x$,
and we let $\Imc^\bot_i = \Imc^{0.5}_i$ and $\Imc^\top_i = \Imc^\top_{i-1}$.
By construction, we maintain the invariant 
$\frac{\countDomP{C\sqcap D}{\Imc^\bot_i}}{\countDomP{D}{\Imc^\bot_i}} \leq x
\leq \frac{\countDomP{C\sqcap D}{\Imc^\top_i}}{\countDomP{D}{\Imc^\top_i}}$
and we have 
$\frac{\countDomP{C\sqcap D}{\Imc^\top_i}}{\countDomP{D}{\Imc^\top_i}}
- \frac{\countDomP{C\sqcap D}{\Imc^\bot_i}}{\countDomP{D}{\Imc^\bot_i}} 
\leq \frac{M-m}{2^n}
$.
Hence, after at most $i = \big\lceil \log\big(\frac{M-m}{\epsilon} \big) \big\rceil$ iterations, 
$\Imc^{0.5}_i$ is a model of $\kb$ that proves the claim. 
\qed
\end{proof}
\fi

\section{Logical Properties}
\label{sec_properties}

We now discuss some logical properties of Statistical \ALC.
We already noted that Statistical \ALC generalizes classical \ALC in Proposition 
\ref{prop:inclusion_and_det_conditionals}.
Furthermore, p-entailment yields a tight and dense (Proposition \ref{prop:intermed_values}) 
answer interval for all queries whose condition can be satisfied by models of the knowledge base. 
Let us also note that statistical \ALC is \emph{language invariant}. That is, increasing the language
by adding new concept or role names does not change the semantics of \ALC.
This can be seen immediately by observing that the interpretation of conditionals in \eqref{eq:defCondSat}
depends only on the concept and role names that appear in the conditional.

Statistical \ALC is also \emph{representation invariant} in the sense that for all concepts $C_1, D_1$ 
and $C_2,D_2$, if  $C_1 \equiv C_2$ and $D_1 \equiv D_2$
then $(C_1 \mid D_1)[l,u] \equiv (C_2 \mid D_2)[l,u]$.
Hence, changing the syntactic representation 
of conditionals does not change their semantics. 
In particular, entailment results are independent of such changes.

Both l- and p-entailment satisfy the following \emph{independence} property: 
whether or not $\kb \models_l (C\mid D)[\ell,u]$ ($\kb \models_p (C\mid D)[m,M]$) depends only
on the conditionals in $\kb$ that are connected with the query. This may simplify answering the query
by reducing the size of the KB.
In order to make this more precise, we need some additional definitions. For an arbitrary \ALC concept 
$C$, $\sig(C)$ denotes the set of all concept and role names appearing in $C$.
The conditionals $(C_1\mid D_1)[\ell_1,u_1]$ and $(C_2\mid D_2)[\ell_2,u_2]$ are 
\emph{directly connected}
(written $(C_1\mid D_1)[\ell_1,u_1] \rightleftharpoons (C_2\mid D_2)[\ell_2,u_2]$)
if and only if $(\sig(C_1) \cup \sig(D_1)) \cap (\sig(C_2) \cup \sig(D_2)) \neq \emptyset$. That is, two conditionals
are directly connected iff they share concept or role names.  
Let $\rightleftharpoons^*$ denote the transitive closure of $\rightleftharpoons$.
We say that $(C_1\mid D_1)[\ell_1,u_1]$ and $(C_2\mid D_2)[\ell_2,u_2]$ are \emph{connected} iff 
$(C_1\mid D_1)[\ell_1,u_1] \rightleftharpoons^* (C_2\mid D_2)[\ell_2,u_2]$.
The \emph{restriction of $\kb$ to conditionals connected to $(C\mid D)[\ell,u]$}
is the set $\{\kappa \in \kb \mid \kappa \rightleftharpoons^* (C\mid D)[\ell,u]\}$.
Using an analogous definition for queries (qualitative conditionals) $(C_1\mid D_1)$ and $(C_2\mid D_2)$,
we get the following result.
\begin{proposition}[Independence] If $\kb$ is consistent, we have
\begin{enumerate}
	\item $\kb \models_l (C\mid D)[\ell,u]$ iff  
	 $\{\kappa \in \kb \mid \kappa \rightleftharpoons^* (C\mid D)[\ell,u]\}  \models_l (C\mid D)[\ell,u]$.
	\item $\kb \models_p (C\mid D)[m,M]$ iff  
	 $\{\kappa \in \kb \mid \kappa \rightleftharpoons^* (C\mid D)\}  \models_p (C\mid D)[m,M]$.
\end{enumerate}
\end{proposition}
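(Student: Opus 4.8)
The plan is to prove both directions of each biconditional, where one direction is essentially trivial (monotonicity of restriction) and the other requires a model-combination argument. The key insight I would exploit is that the signatures of a connected component are disjoint from the signatures of all other components, so models of the restricted KB can be "freely extended" to models of the full KB without interfering with the query.

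Let me set up notation. Write $\kb_q := \{\kappa \in \kb \mid \kappa \rightleftharpoons^* (C\mid D)[\ell,u]\}$ for the restriction, and let $\kb_r := \kb \setminus \kb_q$ be the remaining conditionals. By definition of $\rightleftharpoons^*$ as a transitive closure of a symmetric relation, $\kb_q$ is a union of connected components and no conditional in $\kb_r$ is directly connected to any conditional in $\kb_q$ or to the query. Crucially, this means the signature $\Sigma_q$ (all concept/role names in $\kb_q$ together with $\sig(C) \cup \sig(D)$) is \emph{disjoint} from the signature $\Sigma_r$ of $\kb_r$.

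For part (1), the forward direction is immediate: since $\kb_q \subseteq \kb$ we have $\modelSet(\kb) \subseteq \modelSet(\kb_q)$, and combined with language invariance this gives $\kb_q \models_l (C\mid D)[\ell,u] \Rightarrow \kb \models_l (C\mid D)[\ell,u]$ for free. The interesting direction is the converse: assuming $\kb \models_l (C\mid D)[\ell,u]$, show $\kb_q \models_l (C\mid D)[\ell,u]$. I would argue contrapositively. Take any $\Imc_q \in \modelSet(\kb_q)$ with $\countDomP{D}{\Imc_q} > 0$ violating the query; I must upgrade it to a model of all of $\kb$ that still violates the query, contradicting $\kb \models_l (C\mid D)[\ell,u]$. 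Since $\kb$ is consistent there is some $\Imc_r \in \modelSet(\kb)$, hence $\Imc_r \in \modelSet(\kb_r)$. Now I build a combined interpretation on the disjoint union of (scaled copies of) the two domains: on the $\Sigma_q$-symbols I interpret everything via $\Imc_q$ and on the $\Sigma_r$-symbols via $\Imc_r$, interpreting each symbol as empty outside its home component. Because the two signatures are disjoint, every conditional in $\kb_q$ sees only the $\Imc_q$ part and every conditional in $\kb_r$ sees only the $\Imc_r$ part, so ratios for each conditional are preserved and the combination models $\kb$; likewise the query ratio depends only on $\Sigma_q$-symbols and is unchanged, so the combined model still violates the query.

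The one subtlety — and the step I expect to be the main obstacle — is that naively taking the disjoint union \emph{does} change ratios, because a concept name $A \in \Sigma_q$ now has extra domain elements (coming from the $\Imc_r$ side) that are \emph{not} in $A$, inflating denominators like $\countDom{\top}$ and corrupting any conditional whose condition is $\top$ or otherwise spans the whole domain. I would resolve this exactly as in the Bisection Lemma: use the copying construction $\Imc \mapsto \Imc^{(n)}$ (which scales all cardinalities by $n$ and so preserves every ratio) to balance the two domains to a common size, and, more importantly, handle the shared background by noting that the query and the conditionals in $\kb_q$ refer only to $\Sigma_q$-symbols — so the relevant ratios $\countDomP{C \sqcap D}{}/\countDomP{D}{}$ and $\countDomP{C_i \sqcap D_i}{}/\countDomP{D_i}{}$ for $\kappa_i \in \kb_q$ are evaluated over concepts built purely from $\Sigma_q$, whose interpretation is concentrated on the $\Imc_q$-component. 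The elements contributed by $\Imc_r$ fall outside every $\Sigma_q$-concept (they satisfy no $\Sigma_q$ concept name, hence lie in the complement), so they contribute neither to $C \sqcap D$ nor to $D$ and leave those ratios intact; symmetrically for $\kb_r$. This decoupling is what makes the construction work, and verifying it cleanly for conditionals whose condition involves $\top$ or negation (where "outside" elements could slip into a concept extension) is where the care is needed.

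Part (2) follows the same template. For the $\Leftarrow$ direction, monotonicity again gives $\modelSet(\kb) \subseteq \modelSet(\kb_q)$, so the infimum and supremum defining p-entailment over $\modelSet(\kb)$ lie within the bounds $[m,M]$ obtained over $\modelSet(\kb_q)$; to get equality I must show the extremal values are actually attained (in the limit) by models of the full $\kb$, which is precisely what the combination construction above delivers — any query-witnessing model of $\kb_q$ lifts to a model of $\kb$ with the same query ratio, so the optimum over $\modelSet(\kb)$ cannot be strictly interior to $[m,M]$. For the $\Rightarrow$ direction, since $\kb_q \subseteq \kb$ every model of $\kb$ is a model of $\kb_q$, giving one inclusion of feasible sets and hence $[m_{\kb}, M_{\kb}] \subseteq [m_{\kb_q}, M_{\kb_q}]$; combined with the lifting the two intervals coincide. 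The only genuinely new ingredient beyond part (1) is checking that feasibility ($\countDomP{D}{} > 0$ for some model) transfers between the two KBs, which is immediate from the lifting construction since it preserves $\countDomP{D}{} > 0$.
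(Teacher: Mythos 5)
Your high-level decomposition matches the paper's: the easy direction follows from $\modelSet(\kb)\subseteq\modelSet(\kb_q)$, and the hard direction requires lifting a model $\Imc_2$ of the restricted KB $\kb_q$ to a model of all of $\kb$ with the same (or arbitrarily close) query ratio, using the fact that the signatures $\Sigma_q$ and $\Sigma_r$ are disjoint. The gap is in the lifting construction, and it sits exactly at the point you flag and then defer. If each symbol is interpreted as empty outside its home component, then any conditional whose condition concept acquires foreign domain elements --- in particular any $(A\mid\top)[\ell,u]$, and more generally any conditional whose condition involves $\top$ or negation --- has its denominator inflated by the other component's domain, so its ratio is strictly diluted. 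Your stated remedy (``the elements contributed by $\Imc_r$ \dots\ lie in the complement, so they contribute neither to $C\sqcap D$ nor to $D$'') is false in precisely these cases: $\top$ and $\neg A$ are themselves concepts over $\Sigma_q$, and the foreign elements do land in their extensions. Scaling via the copying construction cannot repair this, because it cannot fix both components at once. Concretely, let $\kb=\{(A\mid\top)[0.5,0.5],\,(E\mid\top)[0.5,0.5]\}$ and query $(A\mid\top)$, so $\kb_q$ is the first conditional and $\kb_r$ the second. Combining $N$ copies of a model of $\kb_q$ with domain size $2$ and $|A|=1$ with $M$ copies of a model of $\kb$ with domain size $4$, your interpretation yields the $A$-ratio $N/(2N+4M)$ and the $E$-ratio $2M/(2N+4M)$; forcing both to equal $1/2$ requires $M=0$. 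Hence no choice of copy numbers produces a model of $\kb$, the query ratio is not preserved either, and the hard direction of both items remains unproved.

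The argument can be repaired, but it needs a different combination than a disjoint union with empty foreign interpretations. One clean option is a product: take domain $\Delta^{\Imc_2}\times\Delta^{\Imc_0}$, interpret $\Sigma_q$-symbols through the first projection and $\Sigma_r$-symbols through the second. A straightforward induction on concepts shows $\countDomP{F}{\Imc_1}=\countDomP{F}{\Imc_2}\cdot|\Delta^{\Imc_0}|$ for every $\Sigma_q$-concept $F$ (including $\top$ and negated concepts), and symmetrically for $\Sigma_r$-concepts, so every conditional's ratio and the query ratio are preserved exactly, which settles both the l- and the p-entailment claims in one stroke. For what it is worth, the paper's own proof also takes the disjoint-union route and is silent on the same point; its claim that the cardinalities $\countDomP{D}{\Imc_1}=\countDomP{D}{\Imc_2}$ are preserved already fails for $D=\top$. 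So you have reproduced the paper's weakness rather than introduced a new one --- but as written, your proposal does not close the case that you yourself identify as the crux.
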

\ifnum\ProofVersion=1
\begin{proof}
For both claims, it suffices to show that for each model $\Imc_1$ of $\kb$, 
there is a model $\Imc_2$ of $\{\kappa \in \kb \mid \kappa \rightleftharpoons^* (C\mid D)\}$ 
($\{\kappa \in \kb \mid \kappa \rightleftharpoons^* (C\mid D)[\ell,u]\}$)
such that $\countDomP{D}{\Imc_1} = \countDomP{D}{\Imc_2}$
and $\countDomP{C\sqcap D}{\Imc_1} = \countDomP{C \sqcap D}{\Imc_2}$ and vice versa. 

If $\Imc_1$ is a model of $\kb$, let $\Imc_2$ be the restriction of $\Imc_1$ 
to the concept and role names in $\{\kappa \in \kb \mid \kappa \rightleftharpoons^* (C\mid D)\}$. 
Then $\Imc_2$ is still a model of $\{\kappa \in \kb \mid \kappa \rightleftharpoons^* (C\mid D)\}$. In particular,
$\countDomP{D}{\Imc_1} = \countDomP{D}{\Imc_2}$ and $\countDomP{C\sqcap D}{\Imc_1} = \countDomP{C \sqcap D}{\Imc_2}$.

Conversely, let $\Imc_2$ be a model of $\{\kappa \in \kb \mid \kappa \rightleftharpoons^* (C\mid D)\}$.
By consistency of $\kb$, there is a model $\Imc_0$ of $\kb$. 
Let $\Imc_1$ be the interpretation defined as the disjoint union of
$\Imc_0$ and $\Imc_2$. 
Since $\{\kappa \in \kb \mid \kappa \rightleftharpoons^* (C\mid D)\}$ and 
$\kb \setminus \{\kappa \in \kb \mid \kappa \rightleftharpoons^* (C\mid D)\}$
do not share any concept and role names by definition of connectedness, 
$\Imc_1$ satisfies conditionals in  $\{\kappa \in \kb \mid \kappa \rightleftharpoons^* (C\mid D)\}$
iff $\Imc_2$ does and conditionals in 
$\kb \setminus \{\kappa \in \kb \mid \kappa \rightleftharpoons^* (C\mid D)\}$
iff $\Imc_0$ does. Hence, $\Imc_1$ is a model of $\kb$. In particular, it holds that
$\countDomP{D}{\Imc_1} = \countDomP{D}{\Imc_2}$ and
$\countDomP{C\sqcap D}{\Imc_1} = \countDomP{C \sqcap D}{\Imc_2}$.
\qed
\end{proof}
\fi
Another interesting property of probabilistic logics is \emph{continuity}. Intuitively, continuity states that 
minor changes in the knowledge base do not yield major changes in the derived probabilities.
However, as demonstrated by Courtney and Paris, this condition is too strong when reasoning with the 
maximum entropy model of the knowledge base \cite[p.\ 90]{Paris:1994}. 
The same problem arises for the probabilistic entailment 
problem~\cite[Example 4]{potyka2015probabilistic}. While these logics considered subjective probabilities,
the same problem occurs in our setting for statistical probabilities as we demonstrate now.
\begin{example} 
Consider the knowledge base 
$$\kb = \{(B\mid A)[0.4,0.5], (C\mid A)[0.5,0.6], (B\mid C)[1,1], (C\mid B)[1,1]\}.$$
The interpretation $\Imc = (\{a,b\}, \cdot^\Imc)$ with $A^\Imc = \{a,b\}$, $B^\Imc = C^\Imc = \{b\}$
is a model of $\kb$, i.e., $\kb$ is consistent. In particular, since $A$ is interpreted by the whole domain
of $\Imc$ we know that 
$$\kb \models_p (A\mid \top)[m,1]$$
for some $m \in [0,1]$.
As explained in Proposition \ref{prop:inclusion_and_det_conditionals}, deterministic conditionals correspond
to concept inclusions and so $(B\mid C)[1,1]$ and $(C\mid B)[1,1]$ imply that $B^{\Imc'} = C^{\Imc'}$ for all models 
$\Imc'$ of $\kb$. 
Therefore, $\frac{[B \sqcap A]^{\Imc'}}{[A]^{\Imc'}} = \frac{[C \sqcap A]^{\Imc'}}{[A]^{\Imc'}}$.
Let $\kb'$ denote the knowledge base that is obtained from $\kb$ by decreasing the upper bound of the first conditional in $\kb$ by an arbitrarily small $\epsilon > 0$. That is,
$$\kb' = \{(B\mid A)[0.4,0.5-\epsilon], (C\mid A)[0.5,0.6], (B\mid C)[1,1], (C\mid B)[1,1]\}.$$
Then the only way to satisfy the first two conditionals in $\kb'$ is by interpreting $A$ by the empty set.
Indeed, the interpretation $\Imc_\emptyset$ that interprets all concept names by the empty set is a model of
$\kb'$. So $\kb'$ is consistent and 
$$\kb' \models_p (A\mid \top)[0,0].$$ 
Hence, a minor change in the probabilities in the knowledge base can yield a severe change in the entailed
probabilities. This means that the p-entailment relation that we consider here is not continuous in this
way either.
\end{example}
As an alternative to this strong notion of continuity, Paris proposed to measure the difference between 
KBs by the Blaschke distance between their models. Blaschke continuity says that if KBs are close
with respect to the Blaschke distance, the entailed probabilities are close. Blaschke continuity is satisfied by 
some probabilistic logics under maximum entropy and probabilistic 
entailment~\cite{Paris:1994,potyka2015probabilistic}. In \cite{Paris:1994,potyka2015probabilistic}, 
probabilistic interpretations are probability distributions over a finite number of classical interpretations and 
the distance between two interpretations is the distance between the corresponding probability vectors. We 
cannot apply this definition here because we interpret conditionals by means of classical interpretations.
It is not at all clear what a reasonable definition for the distance between two classical interpretations is.
We leave the search for a reasonable topology on the space of classical interpretations for future work.

\section{Statistical \el}
\label{sec_stat_el}
Proposition \ref{prop:inclusion_and_det_conditionals} and the fact that reasoning in \ALC is 
{\sc ExpTime}-complete, show that our reasoning problems are {\sc ExpTime}-hard. However,
we did not find any upper bounds on the complexity of reasoning in \ALC so far.
We will therefore focus on some fragments of $\ALC$ now.

To begin with, we will focus on the sublogic \el~\cite{BaBL05} of \ALC that does not allow for negation
and universal quantification. 
Formally, \el concepts are constructed by the grammar rule
$C ::= A \mid \top \mid C\sqcap C \mid \exists r.C$,
where $A\in N_C$ and $r\in N_R$.	
A \emph{statistical \el KB} is a statistical \ALC KB where conditionals are restricted to \el concepts.
Notice that, due to the upper bounds in conditionals, statistical \el KBs are capable of 
expressing some weak variants of negations. For instance, a statement
$(C\mid \top)[\ell,u]$ with $u<1$ restricts every model $\Imc=(\Delta^\Imc,\cdot^\Imc)$ to contain
at least one element $\delta\in\Delta^\Imc\setminus C^\Imc$. Thus, contrary to classical \el, 
statistical \el KBs may be inconsistent.
\begin{example}
\label{exa:inconsistency}
Consider the KB $\Kmc_1=(\emptyset,\Cmc_1)$, where
\[
\Cmc_1 = \{ (A \mid \top)[0,0.2], \ (A \mid \top)[0.3,1]\}.
\]
Since $\top^\Imc=\Delta^\Imc\not=\emptyset$, 
every model $\Imc=(\Delta^\Imc,\cdot^\Imc)$ of $\Kmc_1$ must satisfy
\[
\countDom{A} \le 0.2 \countDom{\top} < 0.3 \countDom{\top} \le \countDom{A},
\]
which is clearly a contradiction. Thus, $\Kmc_1$ is inconsistent.
\end{example}
More interestingly, though, it is possible to simulate valuations over a finite set 
of propositional formulas wit the help of conditional statements. Thus, the satisfiability problem
is at least NP-hard even for Statistical \el. 
\begin{theorem}
\label{thm:nphard:el}
The satisfiability problem for Statistical \el is NP-hard.
\end{theorem}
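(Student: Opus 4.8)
The plan is to reduce from a standard NP-complete problem, and the paper's own remark points the way: statistical \el conditionals can encode weak negation, so the natural target is the satisfiability of propositional formulas, i.e.\ a reduction from \textsc{Sat} (or \textsc{3-Sat}). The core idea is to represent each propositional variable $p_i$ by a concept name $A_i$, and to engineer the KB so that every model is forced to partition the domain in a way that encodes a consistent truth valuation, with each formula of the instance being satisfiable exactly when the corresponding conditional constraints are jointly satisfiable.

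First I would fix a \textsc{3-Sat} instance $\varphi = \bigwedge_j \gamma_j$ over variables $p_1,\dots,p_n$, where each clause $\gamma_j$ is a disjunction of three literals. For each variable $p_i$ I introduce a concept name $A_i$; the intended reading is that membership of a distinguished element (or, more robustly, a fixed proportion of the domain) in $A_i^\Imc$ encodes the truth value of $p_i$. The key technical device is to use conditionals of the form $(A\mid\top)[\ell,u]$ with carefully chosen rationals so that, as in Example~\ref{exa:inconsistency}, the domain is forced to have a specific structure, and then to encode each clause $\gamma_j = L_{j,1}\vee L_{j,2}\vee L_{j,3}$ by a conditional asserting that the proportion of the domain violating all three literals is $0$, which is satisfiable iff the valuation makes $\gamma_j$ true. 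Since a literal $\neg p_i$ corresponds to the complement of $A_i$, which \el cannot name directly, the obstacle is simulating negation; I expect to handle this by introducing an auxiliary concept name $\bar A_i$ together with conditionals forcing $A_i^\Imc$ and $\bar A_i^\Imc$ to be complementary (a full-domain constraint $(A_i\sqcap\bar A_i\mid\top)[0,0]$ forcing disjointness, plus a counting constraint forcing their proportions to sum to the whole domain), so that a clause becomes a conjunction of named concepts whose combined proportion can be bounded.

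The main obstacle, and the part requiring the most care, is making the counting semantics do genuine Boolean work: because satisfaction is about \emph{proportions} of a finite domain rather than membership of a single individual, I must ensure that the arithmetic of the bounds forces each $A_i$ to behave as an all-or-nothing (crisp) truth value rather than permitting fractional ``valuations'' that satisfy the conditionals without corresponding to any Boolean assignment. The cleanest route is to design the numeric bounds so that the only feasible solutions to the induced system of proportion constraints are the integral ones, for instance by forcing each variable concept to have proportion either $0$ or $1$ relative to a controlling super-concept, and then verifying that a Boolean valuation satisfying $\varphi$ yields a model (of constant domain size, e.g.\ a single-element or fixed-size domain realizing the chosen proportions) and, conversely, that any model projects to a satisfying valuation.

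Finally, I would check that the reduction is polynomial: the KB has $O(n)$ concept names, $O(n)$ conditionals enforcing the complementation of each $A_i$, and $O(m)$ conditionals for the $m$ clauses, each with rational bounds whose numerators and denominators are bounded by a fixed polynomial in $n$ and $m$, so all numbers have polynomial bit-size. The soundness direction (satisfying valuation $\Rightarrow$ model) is handled by explicitly constructing an interpretation on a small fixed domain that realizes the intended proportions; the completeness direction (model $\Rightarrow$ satisfying valuation) follows from the crispness argument above, reading off the truth value of $p_i$ from whether $A_i$ is interpreted as the full domain or the empty set. Combining both directions establishes that $\varphi$ is satisfiable iff the constructed statistical \el KB is consistent, yielding NP-hardness.
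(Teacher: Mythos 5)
Your overall strategy---pairs of concept names $A_i,\bar A_i$ for the two polarities of each variable, a $[0,0]$ conditional to force their disjointness, lower-bound conditionals to force the pair to cover the domain, and conjunctions of literal-concepts to talk about clauses---is essentially the machinery the paper uses, and you correctly identify that the bounds provide the needed weak negation. But your clause gadget is unsound. You encode a clause $\gamma_j=L_{j,1}\vee L_{j,2}\vee L_{j,3}$ as ``the proportion of the domain violating all three literals is $0$'', i.e.\ $(\overline{L_{j,1}}\sqcap\overline{L_{j,2}}\sqcap\overline{L_{j,3}}\mid\top)[0,0]$, which forces \emph{every} domain element to satisfy every clause. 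At the same time, your complementation gadget (disjointness plus lower bounds summing to $1$) forces $A_i^\Imc$ and $\bar A_i^\Imc$ to occupy \emph{exact} fixed proportions of the domain, e.g.\ exactly half each. These two requirements clash for many satisfiable formulas: for $\varphi=(p_1\vee p_1\vee p_1)$ (or any CNF with a forced literal) the clause conditional collapses to $(\bar A_1\mid\top)[0,0]$, which contradicts $(\bar A_1\mid\top)[0.5,1]$ on any nonempty domain, so your KB is inconsistent although $\varphi$ is satisfiable. The reduction therefore fails in the direction ``satisfiable $\Rightarrow$ consistent''. The fallback you sketch---forcing each $A_i$ to have proportion $0$ or $1$ so that the whole domain encodes one crisp valuation---is not expressible either: a conditional supplies a single interval $[\ell,u]$, and nothing in the logic lets you impose the disjunctive constraint ``proportion lies in $\{0\}\cup\{1\}$''.

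The paper's proof avoids this tension by demanding the critical property of only \emph{part} of the domain. It reduces validity of a 3DNF $\varphi=\bigvee_i\kappa_i$ to \emph{in}consistency (equivalently, 3CNF satisfiability to consistency): each element still encodes a total valuation via the exact-half partition, but the terms are collected by GCIs $\bigsqcap_{j=1}^3 A_{\lambda_i^j}\sqsubseteq B_i$ and $B_i\sqsubseteq C$, and the single conditional $(C\mid\top)[0,0.5]$ merely requires that at least half the elements falsify every term. A two-element model, with one element carrying the witnessing valuation and the other its bitwise complement (which is free to land in $C$), then always exists. To repair your direct SAT reduction you would do the mirror image: introduce concept names $V_j$ and $V$ with $\overline{L_{j,1}}\sqcap\overline{L_{j,2}}\sqcap\overline{L_{j,3}}\sqsubseteq V_j$ and $V_j\sqsubseteq V$, and assert $(V\mid\top)[0,0.5]$, so that at least one element satisfies all clauses while its complementary partner may violate them. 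As written, your construction does not establish the claimed equivalence.
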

\ifnum\ProofVersion=1
\begin{proof}
We provide a reduction from the well-known coNP-complete problem of deciding validity of a 
3DNF formula. Let $\varphi=\bigvee_{i=1}^n\kappa_i$ be a 3DNF formula; that is, each
$\kappa_i, 1\le i\le n$ is a conjunction of three literals 
$\kappa_i=\lambda_i^1\land\lambda_i^2\land\lambda_i^3$.
We construct a statistical \el KB as follows. Let $\Vmc$ be the set of all variables appearing in
$\varphi$. For every $x\in\Vmc$, we 
use two concept names $A_x$ and $A_{\neg x}$. In addition, for every clause $\kappa_i$ we 
introduce a concept name $B_i$, and create an additional concept name $C$.

Consider the KB $\Kmc_\varphi=(\Tmc_\varphi,\Cmc_\varphi)$, where
\begin{align*}
\Tmc_\varphi := {} & \{ \bigsqcap_{j=1}^3A_{\lambda_i^j} \sqsubseteq B_i, B_i\sqsubseteq C \mid 1\le i\le n \} \\
\Cmc_\varphi := {} & \{ (A_x \mid \top) [0.5,1], (A_{\neg x} \mid \top) [0.5,1], (A_{\neg x} \mid A_x) [0,0] \}
				\cup \{ (C \mid \top)[0,0.5] \}.
\end{align*}
Then it holds that $\varphi$ is valid iff $\Kmc_\varphi$ is inconsistent.
\qed
\end{proof}
\fi
%

On the other hand, consistency can be decided in non-deterministic exponential time, through a 
reduction to integer programming. Before describing the reduction in detail, we introduce a few
simplifications.

Recall from Proposition~\ref{prop:inclusion_and_det_conditionals} that a conditionals of the form 
$(D\mid C)[1,1]$ is equivalent to the classical GCI $C\sqsubseteq D$. Thus, in the following
we will often express statistical \el KBs as pairs $\Kmc=(\Tmc,\Cmc)$, where \Tmc is a classical
TBox (i.e., a finite set of GCIs), and \Cmc is a set of conditionals.
A statistical \el KB $\Kmc=(\Tmc,\Cmc)$ is said to be in \emph{normal form} if all the GCIs in \Tmc
are of the form
\[
A_1\sqcap A_2 \sqsubseteq B, \qquad A \sqsubseteq \exists r.B, \qquad \exists r.A\sqsubseteq B
\]
and all its conditionals are of the form
\[
(A \mid B)[\ell,u]
\]
where $A,B\in N_C\cup\{\top\}$, and $r\in N_R$.
Informally, a KB is in normal form if at most one constructor is used in any GCI, and all conditionals 
are atomic (i.e., between concept names).
Every KB can be transformed to an equivalent one (w.r.t.\ the original signature)
in linear time using the normalization rules from~\cite{BaBL05}, and introducing new concept names for 
complex concepts appearing in conditionals. More precisely, we replace any  
conditional of the form $(C\mid D)[\ell,u]$ with the statement $(A\mid B)[\ell,u]$, where $A,B$ are
two fresh concept names, and extend the TBox with the axioms $A\equiv C$, and $B\equiv D$.

The main idea behind our consistency algorithm is to partition the finite domain of a model into the different 
types that they define, and use integer programming to verify that all the logical and conditional constraints
are satisfied.
Let $N_C(\Kmc)$ denote the set of all concept names appearing in the KB \Kmc.
We call any subset $\theta\subseteq N_C(\Kmc)$ a \emph{type} for \Kmc. 
Intuitively, such a type $\theta$ represents all the elements of the domain that are interpreted to
belong to all concept names $A\in\theta$ and no concept name $A\notin\theta$.
We denote as $\Theta(\Kmc)$ the set of all types of \Kmc.
To simplify the presentation, in the following we treat $\top$ as a concept name that belongs to all
types.

Given a statistical \el KB $\Kmc=(\Tmc,\Cmc)$ in normal form, we consider an integer variable $x_\theta$ 
for every type $\theta\in\Theta(\Kmc)$. These variables will express the number of domain elements that
belong to the corresponding type. In addition, $x_\top$ will be used to represent the total size of the
domain.
We build a system of linear inequalities over these variables as follows. 
First, we require that all variables have a value at least $0$, and that the sizes of all types add exactly
the size of the domain.
\begin{align}
\sum_{\theta\in\Theta(\Kmc)}x_\theta = {} & x_\top \label{eqn:con1} \\
0 \le {} & x_\theta   & \text{for all $\theta\in\Theta(\Kmc)$}
\end{align}
Then, we ensure that all the conditional statements from the KB are satisfied by adding, for each
statement $(A\mid B)[\ell,u]\in\Cmc$ the constraint
\begin{align}
\ell\cdot\sum_{B\in\theta}x_\theta \le \sum_{A,B\in\theta} x_\theta \le u\cdot \sum_{B\in\theta}x_\theta,
\label{eqn:conditional}
\end{align}
Finally, we must ensure that the types satisfy all the logical constraints introduced by the TBox. 
The GCI $A_1\sqcap A_2 \sqsubseteq B$ states that every element that belongs to both $A_1$ and $A_2$
must also belong to $B$. This means that types containing $A_1,A_2$ but excluding $B$ should not
be populated.
We thus introduce the inequality
\begin{align}
x_\theta = {} & 0 & \text{if $A_1\sqcap A_2 \sqsubseteq B\in\Tmc$, $A_1,A_2\in\theta$, and $B\notin\theta$}  \label{eqn:conlast}
\end{align}
Dealing with existential restrictions requires checking different alternatives, which we 
solve by creating different linear programs.
The GCI $A\sqsubseteq \exists r.B$ implies that, whenever there exists an element in $A$, there must
also exist at least one element in $B$. Thus, to satisfy this axiom, either $A$ should be empty
(i.e., $\sum_{A\in\theta}x_\theta =0$), or $\sum_{B\in\theta}x_\theta \ge 1$.
Hence, for every existential restriction of the form $A\sqsubseteq \exists r.B$, we define the set
\[
\Emc_{A,B} := \{\sum_{A\in\theta}x_\theta =0, \sum_{B\in\theta}x_\theta \ge 1\}
\]
To deal with GCIs of the form
$\exists r.A\sqsubseteq B$, we follow a similar approach, together with the ideas of the completion
algorithm for classical \el.
For every pair of existential restrictions $A\sqsubseteq \exists r.B, \exists r.C\sqsubseteq D$,
we define the set
\[
\Fmc_{A,B,C,D} := \{\sum_{A\in\theta,D\notin\theta}x_\theta =0, \sum_{B\in\theta,C\notin\theta}x_\theta \ge 1\}
\]
Intuitively, $\sum_{A\in\theta,D\notin\theta}x_\theta\ge 1$ whenever there exists an element that
belongs to $A$ but not to $D$. If this is the case, and the GCIs 
$A\sqsubseteq \exists r.B,\exists r.C\sqsubseteq D$ belong to the TBox \Tmc, then there must 
exist some element that belongs to $B$ but not to $C$. 

We call the hitting sets of 
$$\{\Emc_{A,B}\mid A\sqsubseteq\exists r.B\in\Tmc\}\cup
  \{\Fmc_{A,B,C,D}\mid A\sqsubseteq\exists r.B,\exists r.C\sqsubseteq D\in\Tmc\}$$
\emph{choices for \Tmc}. A \emph{program for \Kmc}  is an integer program containing all the
inequalities~\eqref{eqn:con1}--\eqref{eqn:conlast} and a choice for \Tmc.
Then we get the following result.
\begin{lemma}
\Kmc is consistent iff there exists a program for \Kmc that is satisfiable.
\end{lemma}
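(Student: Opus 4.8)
The plan is to prove both directions by translating between finite models of \Kmc and integer solutions of a program, where the variable $x_\theta$ records how many domain elements realise the type $\theta$. Throughout, for a finite model \Imc and $d\in\Delta^\Imc$, write $\theta_d=\{A\in N_C(\Kmc)\mid d\in A^\Imc\}$ for the type of $d$; setting $x_\theta:=|\{d\in\Delta^\Imc\mid\theta_d=\theta\}|$ then gives $\countDom{A}=\sum_{A\in\theta}x_\theta$ and $\countDom{A\sqcap B}=\sum_{A,B\in\theta}x_\theta$, which is what links the two worlds.

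For the ``only if'' direction, suppose $\Imc\models\Kmc$ and take $x_\theta$ as above and $x_\top:=|\Delta^\Imc|\ge 1$. Constraint~\eqref{eqn:con1} and nonnegativity hold by construction. Each conditional $(A\mid B)[\ell,u]\in\Cmc$ yields~\eqref{eqn:conditional}: if $\countDom{B}=0$ both sides vanish, and otherwise $\countDom{A\sqcap B}/\countDom{B}\in[\ell,u]$ rearranges exactly into~\eqref{eqn:conditional}. A GCI $A_1\sqcap A_2\sqsubseteq B$ forbids any element whose type $\theta$ contains $A_1,A_2$ but not $B$, giving~\eqref{eqn:conlast}. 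It then remains to read off a choice for \Tmc that this solution satisfies. For each $A\sqsubseteq\exists r.B$, if $A^\Imc=\emptyset$ select $\sum_{A\in\theta}x_\theta=0$ from $\Emc_{A,B}$; otherwise some $A$-element has an $r$-successor in $B^\Imc$, so $\countDom{B}\ge 1$ and we select $\sum_{B\in\theta}x_\theta\ge 1$. For each pair $A\sqsubseteq\exists r.B$, $\exists r.C\sqsubseteq D$, if every $A$-element already lies in $D$ select the first member of $\Fmc_{A,B,C,D}$; otherwise pick $d\in A^\Imc\setminus D^\Imc$, whose witnessing $r$-successor must avoid $C$ (else $\exists r.C\sqsubseteq D$ forces $d\in D$), so $\sum_{B\in\theta,C\notin\theta}x_\theta\ge 1$ and we select the second member. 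The selected constraints form a choice for \Tmc which, together with~\eqref{eqn:con1}--\eqref{eqn:conlast}, is satisfied by our $x_\theta$.

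For the ``if'' direction, take a satisfiable program, i.e.\ a choice $S$ for \Tmc and an integer solution $\{x_\theta\}$ with $x_\top\ge 1$ (a model must have a nonempty domain; the degenerate all-zero solution does not correspond to a model and must be excluded). Build \Imc by taking, for each type $\theta$, exactly $x_\theta$ fresh elements of type $\theta$, and set $d\in A^\Imc$ iff $A\in\theta_d$. Then \eqref{eqn:con1} makes $\Delta^\Imc$ finite and nonempty, \eqref{eqn:conditional} reproduces $\Imc\models(A\mid B)[\ell,u]$ for every conditional, and \eqref{eqn:conlast} guarantees that every conjunctive GCI $A_1\sqcap A_2\sqsubseteq B$ holds, since no populated type violates it. What is left is to define the roles so that all existential GCIs are respected, and this is the heart of the argument.

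The role construction is the main obstacle, and it is exactly where the sets $\Emc_{A,B}$ and $\Fmc_{A,B,C,D}$ (via the constraints chosen in $S$) enter; the idea mirrors the completion algorithm for classical \el~\cite{BaBL05}. I would define $r^\Imc$ greedily: put $(d,e)\in r^\Imc$ whenever some $A\sqsubseteq\exists r.B\in\Tmc$ has $A\in\theta_d$ and $B\in\theta_e$ and, for every $\exists r.C\sqsubseteq D\in\Tmc$ with $C\in\theta_e$, also $D\in\theta_d$. The second condition makes every axiom $\exists r.C\sqsubseteq D$ hold automatically, since an $r$-successor in $C$ is only ever attached to an element already in $D$. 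The real work is to verify each $A\sqsubseteq\exists r.B$: if $S$ chose $\sum_{A\in\theta}x_\theta=0$ there is no $A$-element, and otherwise $S$ chose $\sum_{B\in\theta}x_\theta\ge 1$, so I must exhibit, for every $d$ with $A\in\theta_d$, a populated $B$-type $\theta_e$ avoiding every concept $C$ that is \emph{forbidden} for $d$, namely every $C$ with $\exists r.C\sqsubseteq D\in\Tmc$ and $D\notin\theta_d$. For one such $C$, the pair $A\sqsubseteq\exists r.B$, $\exists r.C\sqsubseteq D$ produces $\Fmc_{A,B,C,D}$; since $d$ witnesses $\sum_{A\in\theta,D\notin\theta}x_\theta\ge 1$, the choice $S$ cannot contain the first member and must contain $\sum_{B\in\theta,C\notin\theta}x_\theta\ge 1$, so a $B$-element avoiding that single $C$ exists. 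The delicate point, and the step I expect to be hardest to make rigorous, is to combine these \emph{per-concept} witnesses into one $B$-successor avoiding \emph{all} forbidden $C$ simultaneously; here one exploits that \el has neither negation nor disjunction, so the axioms can never force all $B$-elements to meet a disjunction of forbidden concepts, and one argues (possibly after passing to a completion-closed solution) that a $B$-successor compatible with all of $d$'s constraints is always available. Formalising this compatibility argument, together with the nonemptiness bookkeeping, are the only places where more than routine verification is needed.
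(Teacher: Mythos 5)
Your construction follows the paper's proof almost exactly: both directions translate between type counts and the variables $x_\theta$, and the ``if'' direction builds the domain as a disjoint union of $x_\theta$-many elements per type and then tries to wire up the roles by sending every $A$-element to a single witness of a suitable $B$-type. The gap you flag yourself --- combining the \emph{per-pair} witnesses delivered by the sets $\Fmc_{A,B,C,D}$ into \emph{one} populated $B$-type avoiding all concepts $C$ that are forbidden for a given source type $\theta$ --- is real, and you have not closed it; as it stands your proposal proves the ``only if'' direction and only a conditional version of the ``if'' direction. Moreover, your hope that the absence of disjunction in \el rescues the step is not justified at the level of integer-program solutions: conditionals with upper bounds can encode covering constraints (this is exactly the mechanism behind Theorem~\ref{thm:nphard:el}). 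For instance, $(C_1\mid B)[0.5,1]$, $(C_2\mid B)[0.5,1]$ together with $(E\mid B)[0,0]$ for $E\equiv C_1\sqcap C_2$ force, in every solution, that each populated $B$-type contains exactly one of $C_1,C_2$; then both constraints $\sum_{B\in\theta,C_1\notin\theta}x_\theta\ge 1$ and $\sum_{B\in\theta,C_2\notin\theta}x_\theta\ge 1$ are satisfied by \emph{different} types, yet no populated $B$-type avoids both. So the ``compatibility argument'' you defer is not routine, and ``passing to a completion-closed solution'' would have to be spelled out as an actual modification of the program or of the solution before the direction is established.

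For what it is worth, the paper's own proof handles this point no better: it simply asserts that a type $\tau$ with $B\in\tau$, $x_\tau>0$ and $C\notin\tau$ for \emph{all} relevant $(C,D)$ ``must exist because the solution must satisfy at least one restriction in each $\Fmc_{A,B,C,D}$'', which, as your own discussion makes clear, only yields one witness per pair. So you have correctly located the crux of the argument --- but locating a gap is not the same as filling it, and until the simultaneous-witness step is made rigorous (or the program is strengthened so that it forces such a $\tau$ to be populated), the ``if'' direction remains unproven in your write-up.
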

\begin{proof}
The ``only if'' direction is straight-forward since the inequalities are sound w.r.t.\ the semantics of
statistical KBs. We focus on the ``if'' direction only.

Given a solution of the integer program, we construct an interpretation $\Imc=(\Delta,\cdot^\Imc)$ 
as follows.
We create a domain $\Delta$ with $x_\top$ elements, and partition it such that for every
type $\theta\in\Theta(\Kmc)$, there is a class $[[\theta]]$ containing exactly $x_\theta$ elements.
For every non-empty class, select a representative element $\delta_\theta\in[[\theta]]$.

The interpretation function $\cdot^\Imc$ maps every concept name $A$ to the set
\[
A^\Imc := \bigcup_{A\in\theta}[[\theta]].
\]
Given a non-empty class $[[\theta]]$ such that $A\in\theta$ and $A\sqsubseteq\exists r.B\in \Tmc$,
let $\tau$ be a type such that $B\in\tau$, $x_\tau>0$, and for every $\exists r.C\sqsubseteq D\in\Tmc$, if
$D\notin\theta$, then $C\notin\tau$. 
Notice that such a $\tau$ must exist because the solution must satisfy at least one restriction
in each $\Fmc_{A,B,C,D}$.
We define $r^\theta_{A,B}:= \theta\times\{\delta_\tau\}$ and set
\[
r^\Imc := \bigcup_{A\in\theta,A\sqsubseteq\exists r.B\in\Tmc}r^\theta_{A,B}.
\]
It remains to be shown that \Imc is a model of \Kmc.

Notice that for two concept names $A,B$, it holds that 
$(A\sqcap B)^\Imc=\bigcup_{A,B\in\theta}[[\theta]]$ and
hence $[A\sqcap B]^\Imc|=\sum_{A,B\in\theta}x_\theta$.
Given a conditional statement $(A\mid B)[\ell,u]\in\Cmc$, since the solution must satisfy the
inequality~\eqref{eqn:conditional}, it holds that 
\[
\ell\cdot [B]^\Imc \le [A\sqcap B]^\Imc \le u\cdot [B]^\Imc.
\]
For a GCI $A_1\sqcap A_2\sqsubseteq B\in \Tmc$, by the inequality~\eqref{eqn:conlast} it follows 
that for every type $\theta$ containing both $A_1,A_2$, but not $B$, $[[\theta]]=\emptyset$. 
Hence $A_1^\Imc\cap A_2^\Imc\subseteq B^\Imc$.
For every $A\sqsubseteq\exists r.B\in\Tmc$, and every $\gamma\in\Delta$, if $\gamma\in A^\Imc$
then by construction there is an element $\gamma'$ such that $(\gamma,\gamma')\in r^\Imc$.

Finally, if $(\gamma,\gamma')\in r^\Imc$, then by construction there exists a type $\theta$
and an axiom $A\sqsubseteq \exists r.B\in\Tmc$ such that $\gamma\in[[\theta]]$ and 
$\gamma'=\delta_\tau$. Then, for every GCI $\exists r.C\sqsubseteq D\in \Tmc$, 
$\gamma'\in C^\Imc$ implies $C\in\tau$ and hence $D\in\theta$ which means that $\gamma\in D^\Imc$.
\qed
\end{proof}
Notice that the construction produces exponentially many integer programs, each of which uses 
exponentially many variables, measured on the size of the KB. Since satisfiability of integer linear programs 
is decidable in non-deterministic polynomial time on the size of the program, we obtain a non-deterministic 
exponential time upper bound for deciding consistency of statistical \el KBs.
\begin{theorem}
Consistency of statistical \el KBs is in {\sc NExpTime}.
\end{theorem}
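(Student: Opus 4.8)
The plan is to establish the complexity bound by combining the preceding lemma with a careful accounting of the size of the constructed integer programs, together with the classical fact that satisfiability of an integer linear program lies in {\sc NP} (in the number of variables and the bit-length of the coefficients). By the lemma, \Kmc is consistent iff there exists a program for \Kmc that is satisfiable. A nondeterministic machine can therefore decide consistency by guessing one such program and then guessing a satisfying assignment for it; the core task is to verify that both guesses can be represented and checked within the required resource bounds.

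First I would bound the number of types. Since a type is a subset of $N_C(\Kmc)$, there are at most $2^{|N_C(\Kmc)|}$ types, hence at most exponentially many variables $x_\theta$. Each program consists of the fixed inequalities~\eqref{eqn:con1}--\eqref{eqn:conlast} together with one choice for \Tmc. The inequalities~\eqref{eqn:con1}--\eqref{eqn:conlast} number at most exponentially in $|\Kmc|$ (one constraint per conditional, and at most one per type for the GCI constraints), and each involves only sums of variables with coefficients read off from the rational bounds $\ell,u$, whose bit-length is linear in the size of the input. A choice for \Tmc is a hitting set of the collection of the $\Emc_{A,B}$ and $\Fmc_{A,B,C,D}$; the number of such sets is at most exponential in $|\Tmc|$, each selected inequality again being a single linear (in)equality over the $x_\theta$. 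Thus each program has exponentially many variables and exponentially many constraints, with polynomially-bounded coefficient sizes, so its total description has size exponential in $|\Kmc|$.

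Next I would invoke the standard result that satisfiability of an integer linear program is in {\sc NP} measured in the size of the program: if the program is satisfiable, there exists a satisfying assignment whose entries have bit-length polynomial in the size of the program, which can be guessed and verified in time polynomial in that size. Applying this to a program of exponential size yields a nondeterministic procedure running in exponential time: guess a choice for \Tmc (equivalently, guess for each $\Emc_{A,B}$ and each $\Fmc_{A,B,C,D}$ which of its two inequalities to include), assemble the resulting program, guess a polynomially-sized-in-the-program solution, and check all constraints. Every component of this procedure runs within nondeterministic exponential time in $|\Kmc|$, so consistency of statistical \el KBs is in {\sc NExpTime}.

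The main subtlety — rather than a deep obstacle — is keeping the two sources of exponential blow-up from compounding into something worse: both the number of programs (through the choices for \Tmc) and the size of each individual program are exponential, but because the program is \emph{guessed} rather than enumerated, the choices do not multiply the running time; a single nondeterministic branch fixes one program of exponential size, and the {\sc NP} bound for integer programming is then applied only once, to an input of exponential size. I would make this point explicit to justify that the overall bound remains {\sc NExpTime} and does not degrade to a double-exponential enumeration.
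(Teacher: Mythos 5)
Your proposal is correct and follows essentially the same route as the paper: invoke the preceding lemma, observe that each program has exponentially many variables and constraints with polynomially bounded coefficients, and apply the {\sc NP} upper bound for integer linear programming to a guessed program of exponential size. The paper states this more tersely, but your explicit remark that guessing (rather than enumerating) the choice for \Tmc prevents the two exponential blow-ups from compounding is exactly the right justification.
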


\section{Reasoning with Open Minded KBs}
\label{sec_stat_pos}

In order to regain tractability, we now further restrict statistical \el KBs by disallowing
upper bounds in the conditional statements. We call such knowledge bases open minded.
\begin{definition}[Open Minded KBs]
A statistical \el KB $\Kmc=(\Tmc,\Cmc)$ is \emph{open minded}
iff all the conditional statements $(C\mid D)[\ell,u]\in\Cmc$ are such that $u=1$.
\end{definition}
For the scope of this section, we consider only open minded KBs.
The first obvious consequence of restricting to this class of KBs is that negations cannot be 
simulated. In fact, every open minded KB is consistent and, as in classical \el, can be satisfied in a 
simple universal model.
\begin{theorem}
Every open minded KB is consistent.
\end{theorem}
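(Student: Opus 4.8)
The plan is to exhibit a single, explicit model that works uniformly for every open minded KB --- exactly the ``simple universal model'' alluded to just before the statement. Take the one-element interpretation $\Imc_u = (\{d\}, \cdot^{\Imc_u})$ in which every concept name is interpreted as the full domain, $A^{\Imc_u} = \{d\}$ for all $A \in N_C$, and every role name as the self-loop, $r^{\Imc_u} = \{(d,d)\}$ for all $r \in N_R$. This domain is non-empty and finite, so $\Imc_u$ is a legitimate statistical \el interpretation.

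First I would show, by a routine induction on the structure of \el concepts, that $C^{\Imc_u} = \{d\}$ for every \el concept $C$. The base cases $\top$ and $A \in N_C$ hold by construction; for $C_1 \sqcap C_2$ we intersect two copies of $\{d\}$; and for $\exists r.C$ the self-loop $(d,d)$ together with $d \in C^{\Imc_u}$ (the induction hypothesis) places $d$ in $(\exists r.C)^{\Imc_u}$. Hence every \el concept, and in particular every concept occurring in \Kmc, is interpreted as the whole domain. Here it is essential that \el admits neither negation nor universal quantification, so no constructor can ``carve out'' a proper subset.

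Given this, consistency follows immediately. Every GCI $C \sqsubseteq D$ of \Tmc holds because $C^{\Imc_u} = D^{\Imc_u} = \{d\}$, so $C^{\Imc_u} \subseteq D^{\Imc_u}$. For every conditional $(C \mid D)[\ell, 1] \in \Cmc$ we have $\countDomP{D}{\Imc_u} = 1 > 0$ and $\countDomP{C \sqcap D}{\Imc_u} = 1$, so the defining ratio equals $1$, which lies in $[\ell, 1]$ since $\ell \le 1$. Thus $\Imc_u \models (C \mid D)[\ell,1]$ for every conditional, and therefore $\Imc_u \models \Kmc$, witnessing that \Kmc is consistent.

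There is no genuine combinatorial obstacle here; the only delicate point worth highlighting is exactly where the open minded restriction is used. It enters solely in the last step: the universal model forces every ratio to be $1$, and it satisfies each conditional precisely because the upper bound $u = 1$ admits the value $1$. An upper bound $u < 1$ (as in Example~\ref{exa:inconsistency}) would be violated by this very model, which is why the same argument cannot establish consistency for arbitrary statistical \el KBs. I would close by remarking that the single element $d$ witnesses satisfaction regardless of the lower bounds $\ell$, so once upper bounds are dropped no interaction among conditionals can ever produce an inconsistency.
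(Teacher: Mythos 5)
Your proof is correct and is essentially identical to the paper's own argument: both exhibit the one-element interpretation with every concept name interpreted as the full domain and every role as the self-loop, show by induction that every \el concept denotes the whole domain, and conclude that each conditional's ratio is $1\in[\ell,1]$. Your additional remarks on where the open-minded restriction is used are accurate but not a departure from the paper's route.
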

\begin{proof}
Consider the interpretation $\Imc=(\{\delta\},\cdot^\Imc)$ where the interpretation function maps
every concept name $A$ to $A^\Imc:=\{\delta\}$ and every role name $r$ to
$r^\Imc:=\{(\delta,\delta)\}$. It is easy to see that this interpretation is such that
$C^\Imc=\{\delta\}$ holds for every \el concept $C$. Hence, \Imc satisfies all \el GCIs and 
in addition $[C\sqcap D]^\Imc=[C]^\Imc=1$ which implies that all conditionals are also 
satisfied.
\qed
\end{proof}
Recall that, intuitively, conditionals specify that a proportion of the population satisfies
some given properties. One interesting special case of $p$-entailment is the question how likely it is 
to observe an individual that belongs to a given concept.
\begin{definition}
Let \Kmc be an open minded KB, $C$ a concept, and $m\in[0,1]$. $C$ is \emph{$m$\mbox{-}ne\-ces\-sary}
in \Kmc if \Kmc p-entails $(C\mid\top)[m,1]$.
The problem of \emph{$m$-necessity} consists in deciding whether $C$ is
$m$-necessary in \Kmc.
\end{definition}
We show that this problem can be solved in polynomial time.
As in the previous section, we assume that the KB is in normal form and additionally, that 
all conditional statements $(A\mid B)[\ell,1]\in\Cmc$ are such that $\ell<1$. This latter assumption
is made w.l.o.g.\ since the conditional statement $(A\mid B)[1,1]$ can be equivalently replaced
by the GCI $B\sqsubseteq A$ (see Proposition~\ref{prop:inclusion_and_det_conditionals}).
Moreover, checking $m$-necessity of a complex concept $C$ w.r.t.\ the KB
$(\Tmc,\Cmc)$ is equivalent to deciding $m$-necessity of a new concept name $A$ w.r.t.\
the KB $(\Tmc\cup\{A\equiv C\},\Cmc)$.
Thus, in the following we consider w.l.o.g.\ only the problem of deciding $m$-necessity of a 
concept name w.r.t.\ to a KB in normal form.

Our algorithm extends the completion algorithm for classification of \el TBoxes to in addition keep
track of the lower bounds of necessity for all relevant concept names. 
The algorithm keeps as data structure a set \Smc of tuples of the form $(A,B)$ and $(A,r,B)$ for 
$A,B\in N_C\cup\{\top\}$. These intuitively express that the TBox \Tmc entails the subsumptions 
$A\sqsubseteq B$ and
$A\sqsubseteq \exists r.B$, respectively. Additionally, we keep a function \Lmc that maps every
element $A\in N_C\cup\{\top\}$ to a number $\Lmc(A)\in[0,1]$. Intuitively, $\Lmc(A)=n$ expresses that 
\Kmc p-entails $(A\mid \top)[n,1]$.

The algorithm initializes the structures \Smc and \Lmc as
\begin{align*}
\Smc := {} & \{ (A,A),(A,\top) \mid A\in N_C(\Kmc)\cup\{\top\}\} \\
\Lmc(A) := {} &
\begin{cases}
			0 & \text{if $A\in N_C(\Kmc)$} \\
			1 & \text{if $A=\top$}.
		\end{cases}
\end{align*}
These structures are then updated using the rules from Table~\ref{tab:rules}.
\begin{table}[tb]
\caption{Rules for deciding $m$-necessity}
\label{tab:rules}
\centering
\begin{tabular}{@{}l@{\quad}lll@{}}
\toprule
${\bf C_1}$ & {\bf if} $\{(X,A_1),(X,A_2)\}\subseteq \Smc$ 
	& {\bf and} $A_1\sqcap A_2\sqsubseteq B\in \Tmc$
	& {\bf then} add $(X,B)$ to \Smc \\
${\bf C_2}$ & {\bf if} $(X,A)\in \Smc$ 
	& {\bf and} $A \sqsubseteq \exists r.B\in \Tmc$
	& {\bf then} add $(A,r,B)$ to \Smc \\
${\bf C_3}$ & {\bf if} $\{(X,r,Y),(Y,A)\}\subseteq \Smc$ 
	& {\bf and} $\exists r.A\sqsubseteq B\in \Tmc$
	& {\bf then} add $(X,B)$ to \Smc \\
\midrule	
${\bf L_1}$ & {\bf if} & \phantom{and} $(A\mid B)[\ell,1]\in\Cmc$ 
	& {\bf then} $\Lmc(A)\gets \ell\cdot\Lmc(B)$ \\
${\bf L_2}$ & {\bf if} & \phantom{and} $A_1\sqcap A_2\sqsubseteq B\in \Tmc$
	& {\bf then} $\Lmc(B)\gets \Lmc(A_1)+\Lmc(A_2)-1$ \\
${\bf L_3}$ & {\bf if} $(B,A)\in \Smc$ & 
	& {\bf then} $\Lmc(A)\gets \Lmc(B)$ \\
\bottomrule
\end{tabular}
\end{table}
In each case, a rule is only applied if its execution extends the available knowledge; that is,
if either \Smc is extended to include one more tuple, or a lower bound in \Lmc is increased.
In the latter case, only the larger value is kept through the function \Lmc.

The first three rules in Table~\ref{tab:rules} 
are the standard completion rules for classical \el. The remaining rules update
the lower bounds for the likelihood of all relevant concept names, taking into account their
logical relationship, as explained next.

Rule \rL1 applies the obvious inference associated to conditional statements: from all the 
individuals that belong to $B$, $(A\mid B)[\ell,1]$ states that at least $100\ell\%$ belong also to $A$.
Thus, assuming that $\Lmc(B)$ is the lowest proportion of elements in $B$ possible, the proportion of 
elements in $A$ must be at least $\ell\cdot\Lmc(B)$.
\rL3 expresses that if every element of $B$ must also belong to $A$, then there must be at least as
many elements in $A$ as there are in $B$.
Finally, \rL2 deals with the fact that two concepts that are proportionally large must necessarily
overlap. For example, if $60\%$ of all individuals belong to $A$ and $50\%$ belong to $B$, then 
at least $10\%$ must belong to both $A$ and $B$; otherwise, together they would cover more than 
the whole domain.

The algorithm executes all the rules until \emph{saturation}; that is, until no rule is applicable. 
Once it is saturated, we can decide $m$-necessity from the function \Lmc as follows:
$A$ is $m$-necessary iff $m\le \Lmc(A)$.
Before showing the correctness of this algorithm, we show an important property.

Notice that the likelihood information from \Lmc is never transferred through roles. The reason for this
is that an existential restriction $\exists r.B$ only guarantee the existence of one element belonging to
the concept $B$. Proportionally, the number of elements that belong to $B$ tends to $0$.
\begin{example}
\label{exa:roles}
Consider the KB $(\{\top\sqsubseteq \exists r.A\},\emptyset)$. For any $n\in\mathbb{N}$,
construct the interpretation $\Imc_n:=(\{0,\ldots,n\},\cdot^{\Imc_n})$, where 
$A^{\Imc_n}=\{0\}$ and $r^{\Imc_n}=\{(k,0)\mid 0\le k\le n\}$.
It is easy to see that $\Imc_n$ is a model of the KB and 
$\nicefrac{[A]^{\Imc_n}}{[\top]^{\Imc_n}}<\nicefrac{1}{n}$. Thus, the best lower bound for
$m$-necessity of $A$ is $0$, as correctly given by the algorithm.
\end{example}
\begin{theorem}[correctness]
Let \Lmc be the function obtained by the application of the rules until saturation and $A_0\in N_C$. 
Then $A_0$ is $m$-necessary iff $m\le \Lmc(A)$.
\end{theorem}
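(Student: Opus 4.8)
The plan is to prove the two inequalities implicit in the statement separately: \emph{soundness} of the computed bound (every model gives $A_0$ a proportion at least $\Lmc(A_0)$, which yields the ``if'' direction $m\le\Lmc(A_0)\Rightarrow A_0$ is $m$-necessary) and \emph{tightness} (models realize proportions arbitrarily close to $\Lmc(A_0)$, which yields the ``only if'' direction). Since the universal model of the preceding consistency theorem already gives $\countDom{A_0}/\countDom{\top}=1$, the upper bound of the $p$-entailed interval is always $1$, so the entire content is that $\Lmc(A_0)$ is the tight \emph{lower} bound $\inf_{\Imc\in\modelSet(\Kmc)}\countDomP{A_0}{\Imc}/\countDomP{\top}{\Imc}$, and the claimed equivalence then follows.

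For soundness I would argue by induction over the rule applications, maintaining the invariant that after every step $\Lmc(A)\le \countDom{A}/\countDom{\top}$ holds for every model $\Imc$ of $\Kmc$ and every $A\in N_C\cup\{\top\}$. The base case is immediate: $\Lmc(\top)=1$ is attained with equality and $\Lmc(A)=0$ is trivial. For the inductive step I check the three likelihood rules against the semantics. Rule \rL1 uses that a model satisfying $(A\mid B)[\ell,1]$ has $\countDom{A\sqcap B}\ge\ell\,\countDom{B}$ together with $\countDom{A}\ge\countDom{A\sqcap B}$, so dividing by $\countDom{\top}$ and applying the hypothesis to $B$ gives $\countDom{A}/\countDom{\top}\ge\ell\,\Lmc(B)$ (if $\countDom{B}=0$ the hypothesis forces $\Lmc(B)=0$). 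Rule \rL2 is inclusion--exclusion, $\countDom{A_1\sqcap A_2}\ge\countDom{A_1}+\countDom{A_2}-\countDom{\top}$, combined with $\countDom{B}\ge\countDom{A_1\sqcap A_2}$ from the GCI. Rule \rL3 rests on soundness of the classical completion rules \rC1--\rC3, i.e.\ the standard fact that $(B,A)\in\Smc$ implies $\Tmc\models B\sqsubseteq A$ and hence $\countDom{A}\ge\countDom{B}$. This establishes $\inf_\Imc \countDom{A_0}/\countDom{\top}\ge\Lmc(A_0)$.

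For tightness I would show that for every $\epsilon>0$ there is a model $\Imc$ with $\countDom{A_0}/\countDom{\top}<\Lmc(A_0)+\epsilon$. Exploiting that the KB is open minded---all conditionals are pure lower bounds, so nothing forces a concept to be \emph{small}---the minimizing model keeps every concept at its forced value. Concretely, I would build a type-based interpretation as in the consistency lemma of Section~\ref{sec_stat_el}, whose type proportions $y_\theta$ realize $\sum_{A\in\theta}y_\theta=\Lmc(A)$, making overlaps only as large as the conditional constraints (\rL1) and the conjunctive GCIs (\rL2) demand while respecting the subsumptions recorded in \Smc (\rL3). Existential restrictions are satisfied exactly as in that lemma, by routing the required $r$-successors into a \emph{fixed-size} witness gadget forming a classical model of \Tmc, with witness types chosen via the $\Fmc_{A,B,C,D}$-style conditions so that no $\exists r.C\sqsubseteq D$ is inadvertently triggered. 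Scaling the ``bulk'' to size $N$ while the gadget stays of constant size makes the gadget a vanishing fraction, so its contribution to $\countDom{A_0}/\countDom{\top}$ tends to $0$; this is precisely the phenomenon of Example~\ref{exa:roles} and the reason likelihood is never propagated through roles. Letting $N\to\infty$ drives the proportion of $A_0$ down to $\Lmc(A_0)$.

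The main obstacle is the tightness construction, and within it two coupled points. First, the simultaneous realizability of the target vector $(\Lmc(A))_A$ as genuine type proportions: the conditional lower bounds impose minimum overlaps while the conjunctive GCIs and subsumptions impose inclusions, and all must hold at once. I would discharge this by viewing the admissible proportion vectors as the rational polytope whose defining inequalities are exactly the semantic counterparts of \rL1--\rL3, and by arguing that $\Lmc$, being produced by saturating those very rules, is the attained minimum of the coordinate $\sum_{A_0\in\theta}y_\theta$; feasibility of a concrete point can then be cleared to integers by scaling. Second, adding the witness gadget perturbs the possibly tight conditional ratios by $O(1/N)$; to keep every conditional satisfied I would realize the bulk slightly inside the polytope, with $A_0$ at $\Lmc(A_0)+\epsilon/2$ rather than exactly $\Lmc(A_0)$, so that each conditional holds with strict slack absorbing the $O(1/N)$ perturbation while the final proportion stays below $\Lmc(A_0)+\epsilon$. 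That an $\epsilon$-slack, rather than an exactly optimal model, is in general unavoidable is witnessed by Example~\ref{exa:roles}, where $\Lmc(A)=0$ is only ever approached; this is also in line with the Intermediate Values statement of Proposition~\ref{prop:intermed_values}.
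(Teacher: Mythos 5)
Your proof follows essentially the same route as the paper's own (sketched) argument: soundness of the rules \rL1--\rL3 established rule by rule for the ``if'' direction, and for the converse a large ``bulk'' interpretation realizing the saturated values $\Lmc(A)$ as proportions, augmented by a constant-size collection of witness elements that handles the existential restrictions and whose relative contribution ($c/|\Delta|$ in the paper, $O(1/N)$ in your write-up) is made smaller than $m-\Lmc(A_0)$. If anything, you are more explicit than the paper's sketch about the one delicate point --- that attaching the witness gadget perturbs conditional ratios that may be tight in the bulk, which you absorb by building the bulk with $\epsilon/2$ slack --- so no substantive divergence or gap to report.
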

\begin{proof}[sketch]
It is easy to see that all the rules are sound, which proves the ``if'' direction. For the converse
direction, we consider a finite domain $\Delta$ and an interpretation $\cdot^\Imc$ of the concept names 
such that $\nicefrac{[A]^\Imc}{|\Delta|}=\Lmc(A)$ and the post-conditions of the rules \rL1--\rL3
are satisfied. Such interpretation can be obtained recursively by considering the last rule application that 
updated $\Lmc(A)$. 
Assume w.l.o.g.\ that the domain is large enough so that $c/|\Delta|<m-\Lmc(A_0)$, where
$c$ is the number of concept names appearing in \Kmc.
It is easy to see that this interpretation satisfies all conditional statements and the GCIs
$A_1\sqcap A_2\sqsubseteq B\in\Tmc$. For every concept name $A$, create a new domain element
$\delta_A$ and extend the interpretation \Imc such that $\delta_A\in B$ iff $(A,B)\in \Smc$.
Given a role name $r$, we define
$r^\Imc := \{(\gamma,\delta_B)\mid A\sqsubseteq \exists r.B, \gamma\in A^\Imc\}$. Then,
this interpretation satisfies the KB \Kmc, and $[A_0]^\Imc/|\Delta|\le \Lmc(A_0) + c/|\Delta|<m$.
\qed
\end{proof}
Thus, the algorithm can correctly decide $m$-necessity of a given concept name. It remains only to be
shown that the process terminates after polynomially many rule applications.
To guarantee this, we impose an ordering in the rule applications. First, we apply all the classical
rules \rC1--\rC3, and only when no such rules are applicable, we update the function \Lmc
through the rules \rL1--\rL3. In this case, the rule that will update to the largest possible value is
applied first.
It is known that only polynomially many classical rules (on the size of \Tmc) can be applied~\cite{BaBL05}. 
Deciding which bound rule to apply next requires polynomial time on the number of concept names
in \Kmc. Moreover, since the largest update is applied first, the value of $\Lmc(A)$ is changed at most
once for every concept name $A$. Hence, only linearly many rules are applied.
Overall, this means that the algorithm terminates after polynomially many rule applications, which
yields the following result.

\begin{theorem}
Deciding $m$-necessity is in {\sc P}.
\end{theorem}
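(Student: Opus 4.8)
The plan is to leverage the correctness theorem already established, so that it suffices to show the saturation procedure halts after polynomially many rule applications, each computable in polynomial time; the final test $m\le\Lmc(A_0)$ is then a single comparison. I would split the analysis into the two phases suggested by the rule ordering. First observe that the classical rules \rC1--\rC3 never read the function \Lmc, so the set \Smc can be saturated independently and completely before any bound is propagated. For this phase I would simply invoke the classical \el completion result of~\cite{BaBL05}: the candidate tuples $(A,B)$ and $(A,r,B)$ with $A,B\in\NC(\Kmc)\cup\{\top\}$ and $r\in\NR$ number only polynomially many, each classical rule application adds at least one new tuple to \Smc, and applicability can be tested in polynomial time, so \Smc is saturated in polynomial time.

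The crux is the second phase, bounding the number of applications of \rL1--\rL3. Here I would isolate one structural property of the bound rules: \emph{every rule produces an output that is at most the maximum of the \Lmc-values it reads}. Indeed, \rL1 multiplies $\Lmc(B)$ by $\ell<1$, \rL3 merely copies $\Lmc(B)$, and \rL2 outputs $\Lmc(A_1)+\Lmc(A_2)-1$, which is bounded above by $\min(\Lmc(A_1),\Lmc(A_2))$ since all values lie in $[0,1]$. Combined with the fact that all three rules are monotone in their inputs, this means the bound propagation behaves exactly like a relaxation process with non-increasing ``edges'', i.e.\ a Dijkstra-style largest-value computation with no value-amplifying cycles. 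This is the observation that justifies the heuristic of always performing the update yielding the largest value first.

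The hard part will be turning this into the claim that each $\Lmc(A)$ is updated at most once. I would argue it in Dijkstra fashion: since the current values never exceed the least fixpoint, monotonicity guarantees that no applied update can overshoot the saturated value $\Lmc^*(A)$; and since each rule's output is capped by the maximum of its inputs, the concept name receiving the globally largest available update is in fact being set to its final value. Once it is finalized, no subsequent rule can raise it, because such a rule's output is bounded by inputs that are themselves already finalized or strictly smaller. Processing names in this decreasing order therefore fixes each $\Lmc(A)$ on its first update, so at most $|\NC(\Kmc)|+1$ bound updates occur in total.

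Finally I would assemble the running-time bound: each step of the second phase scans the polynomially many conditionals in \Cmc, GCIs in \Tmc, and tuples in \Smc to select the maximal applicable update, which takes polynomial time; there are only linearly many such steps; and the preceding classical phase is polynomial. Hence the whole procedure runs in polynomial time, and together with the correctness theorem this shows that deciding $m$-necessity is in {\sc P}.
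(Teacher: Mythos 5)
Your proposal is correct and follows essentially the same route as the paper: polynomially many classical rule applications by the standard \el completion bound, followed by only linearly many bound-rule applications because the largest-update-first ordering finalizes each $\Lmc(A)$ on its first change, with each step computable in polynomial time. The only difference is that you explicitly justify the ``changed at most once'' claim via the Dijkstra-style argument (the rules \rL1--\rL3 are monotone and never output more than the values they read), a step the paper asserts without elaboration.
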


\section{Related Work}

Over the years, various probabilistic extensions of description logics have been investigated, see, for instance, 
\cite{koller1997p,LuSt-JWS08,niepert2011log,klinov2013pronto,CePe17,riguzzi2015probabilistic,penaloza2016probabilistic}.
The one that is closest to our approach is the type 1 extension of \ALC proposed in the appendix 
of~\cite{LutzS10}. Briefly,~\cite{LutzS10} introduces probabilistic constraints of the form 
$P(C\mid D) \leq p$, $P(C\mid D) = p$, $P(C\mid D) \geq p$ for \ALC concepts $C,D$. 
These correspond to the conditionals $(C\mid D)[0,p]$, $(C\mid D)[p,p]$, $(C\mid D)[p,1]$, respectively.
Conversely, each conditional can be rewritten as such a probabilistic constraint.  
However, there is a subtle but fundamental difference in the semantics. While the definition
in~\cite{LutzS10} allows for probability distributions over arbitrary domains,
we do not consider uncertainty over the domain. This comes down to allowing only finite domains
and only the uniform distribution over this domain; that is, our approach further restricts the class of models of a KB.
One fundamental difference between the two approaches is that 
Proposition~\ref{prop:inclusion_and_det_conditionals} does not hold in~\cite{LutzS10}: the reason is that 
the conditional $(C\mid D)[1,1]$ can be satisfied by an interpretation \Imc that contains an element
$x\in (C\sqcap \neg D)^\Imc$, where $x$ has probability $0$.  

This difference is the main reason why the {\sc ExpTime} algorithm proposed by Lutz and Schr\"oder
cannot be transferred to our setting. It does not suffice to consider the satisfiable types independently,
but other implicit subsumption relations may depend on the conditionals only.
\begin{example}
Consider the statistical \el KB $\Kmc=(\Tmc,\Cmc)$ with 
\begin{align*}
\Tmc:={} & \{\top\sqsubseteq \exists r. A,\quad \exists r.B\sqsubseteq C\} \\
\Cmc:={} & \{(B\mid\top)[0.5,1], \quad (A\mid B)[0.5,1], \quad (A\mid\top)[0,0.25] \}
\end{align*}
From \Cmc it follows that every element of $A$ must also belong to $B$, and hence every domain
element must be an element of $C$. However, $\neg C$ defines a satisfiable type
(w.r.t.\ \Tmc) which will be interpreted as non-empty in the model generated by the approach
in~\cite{LutzS10}.
\end{example}

\section{Conclusions}

We have introduced Statistical \ALC, a new probabilistic extension of the description logic \ALC
for statistical reasoning. 
We analyzed the basic properties of this logic and 
introduced some reasoning problems that we are interested in. 
As a first step towards effective reasoning in Statistical \ALC, 
we focused on \el,
a well-known sublogic of \ALC that, in its classical form, allows for polynomial-time reasoning.
We showed that upper bounds in conditional constraints  make 
the satisfiability problem in statistical \el\ {\sc NP}-hard and gave an {\sc NExpTime} algorithm
to decide satisfiability. We showed that
tractability can be regained by disallowing strict upper bounds in
the conditional statements.

We are going to provide more algorithms and a more complete picture of the complexity of reasoning 
for Statistical \ALC and its fragments in future work. A combination of integer programming and
the inclusion-exclusion principle may be fruitful to design first algorithms for reasoning in full   
Statistical \ALC.

\bibliographystyle{splncs03}
\bibliography{references}

\end{document}